\definecolor{maroon}{HTML}{A00000}
\definecolor{indigo}{HTML}{4B0082}
\definecolor{green}{HTML}{008000}
\definecolor{red}{HTML}{e41a1c}
\newcommand{\greencheck}{{\protect\color{green} \ding{51}}}
\newcommand{\redcross}{{\protect\color{maroon} \ding{55}}}
\newtheoremstyle{mytheoremstyle}% Name
  {12pt}% Space above
  {12pt}% Space below
  {\itshape}% Body font
  {}% Indent amount
  {\bfseries}% Theorem head font
  {.}% Punctuation after theorem head
  {.5em}% Space after theorem head
  {}% Theorem head spec (can be left empty, meaning 'normal')
\theoremstyle{mytheoremstyle}
\newtheorem{lemma}{Lemma}[section]
\newtheorem{definition}{Definition}[section]
\newtheorem{proposition}{Proposition}[section]
\definecolor{RoseQuartz}{HTML}{F5A798}
\definecolor{Purple}{HTML}{CCCCFF}
\definecolor{Yellow}{HTML}{fcc203}
\newcommand{\myskip}[1]{}
\newcommand{\llama}{\textsc{Llama}}
\newcommand{\olmo}{\textsc{OLMo}}
\newcommand{\pythia}{\textsc{Pythia}}
\newcommand{\dsname}{\textsc{MassiveDS}\xspace}
\DeclareMathOperator{\Ber}{Bernoulli}
\DeclareMathOperator{\Bin}{Binomial}
\title{Scaling Retrieval-Based Language Models \\ with a Trillion-Token Datastore}
\author{
Rulin Shao\textsuperscript{1} \qquad 
Jacqueline He\textsuperscript{1}  \qquad 
Akari Asai\textsuperscript{1}  \qquad 
Weijia Shi\textsuperscript{1}  \vspace{.2em} \\
\textbf{Tim Dettmers\textsuperscript{1}}  \qquad 
\textbf{Sewon Min\textsuperscript{1}}  \qquad 
\textbf{Luke Zettlemoyer\textsuperscript{1}}  \qquad
\textbf{Pang Wei Koh\textsuperscript{1,2}} \vspace{.3em} \\
\textsuperscript{1}University of Washington \quad
\textsuperscript{2}Allen Institute for AI
\vspace{.2em} \\
\texttt{\{rulins,jyyh,akari,swj0419,dettmers,sewon,lsz,pangwei\}}\\
\texttt{@cs.washington.edu}
}
\begin{document}

\maketitle

\begin{abstract}
Scaling laws with respect to 
the amount of training data and the number of parameters 
allow us to predict the cost-benefit trade-offs of pretraining language models (LMs) in different configurations. 
In this paper, we consider another dimension of scaling: the amount of data available at \emph{inference} time.
Specifically, we find that increasing the size of the datastore used by a retrieval-based LM monotonically improves language modeling and several downstream tasks without obvious saturation, 
such that a smaller model augmented with a large datastore outperforms a larger LM-only model on knowledge-intensive tasks.
By plotting compute-optimal scaling curves with varied datastore, model, and pretraining data sizes, we show that using larger datastores can significantly improve model performance for the same training compute budget.  
We carry out our study by constructing a 1.4 trillion-token datastore named \dsname, which is the largest and the most diverse open-sourced datastore for retrieval-based LMs to date, and designing an efficient pipeline for studying datastore scaling in a computationally accessible manner.
Finally, we analyze the effect of improving the retriever, datastore quality filtering, and other design choices on our observed scaling trends. 
Overall, our results show that datastore size should be considered as an integral part of LM efficiency and performance trade-offs. 
To facilitate future research, we open-source our datastore and code at \url{https://github.com/RulinShao/retrieval-scaling}.
\end{abstract}

\begin{figure}[H]
    \centering
    \vspace{-1mm}
    \includegraphics[width=0.83\linewidth]{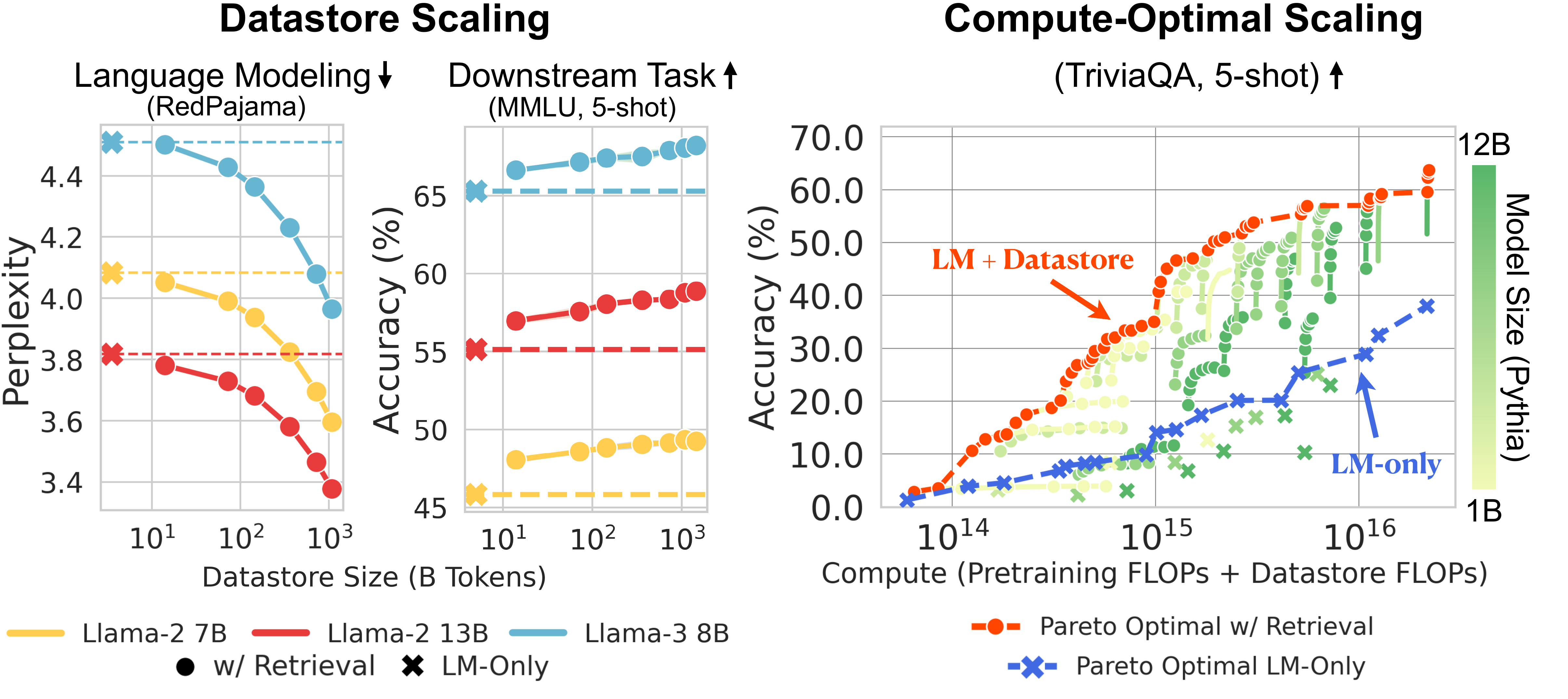}
    \caption{
    \textbf{Datastore scaling improves language modeling and downstream task performance.}
    \textit{Left:} Datastore scaling performance on language modeling and a downstream task (MMLU) with \llama-2 and \llama-3 models. \textit{Right:} Compute-optimal scaling of retrieval-based language models vs.~LM-only models with \pythia\ models. By considering the size of the datastore as an additional dimension of scaling, we can improve model performance at lower training cost.}
    \label{fig:fig1}
\end{figure}

\section{Introduction}

The scaling of large language models (LMs) has driven tremendous performance gains across a variety of tasks~\citep{NEURIPS2020_1457c0d6, kaplan2020scaling,muennighoff2023scaling}.
Current scaling laws are primarily a function of the size of the pretraining data and the number of parameters
\citep{hoffmann2022training, muennighoff2023scaling,gadre2024language}.
In this paper, we consider another dimension of scaling: the amount of data in a datastore used at \emph{inference} time by retrieval-based LMs,
which can directly retrieve information from the datastore to use in context when generating output~\citep{karpukhin2020dense,guu-etal-2020-realm,izacard2020leveraging,asai2023position}.

Retrieval-based LMs have a range of benefits such as improved factuality~\citep{mallen-etal-2023-trust}, effective domain adaptation~\citep{khandelwal2020generalization}, credit attribution~\citep{gao-etal-2023-enabling}, and parametric efficiency~\citep{min-etal-2023-nonparametric}. 
However, most prior work in retrieval-based LMs use datastores constructed from a single data source~\citep{karpukhin2020dense}, such as Wikipedia, with sizes on the order of a few billion tokens.
While there has been some work on larger datastores (Table~\ref{tab:prior_datastores}), with the largest being RETRO~\citep{pmlr-v162-borgeaud22a, wang2024instructretro} in the trillion-token range, these studies use proprietary datastores and custom architectures with a limited evaluation suite. As such, it remains unknown how datastore scaling helps the currently dominant retrieval-in-context approaches on a broad categories of tasks.

We first construct \dsname, a massively multi-domain datastore comprising 1.4 trillion tokens of both general web data and domain specific data (\S\ref{sec:datastore}) that serves as the cornerstone for our scaling study.
A key challenge in studying datastore scaling is the computational cost introduced by building datastores with all possible combinations of factors such as the datastore scale, data composition, random seed for subsampling, and different data preprocessing methods. To make our study accessible,
we design an efficient datastore construction pipeline that reduces the compute needed by an order of magnitude while being equivalent to the standard pipeline (\S\ref{sec:scaling_pipeline}).

Using the proposed pipeline, we systematically evaluate the effects of scaling \dsname on retrieval-based LMs with varying numbers of  parameters and pretraining tokens (\S\ref{sec:results}).
Beyond upstream language modeling, we also consider a suite of diverse downstream tasks, including general-knowledge question answering (QA), domain-specialized QA, and reasoning tasks. 
We find that, first,
datastore scaling consistently improves both language modeling and some downstream tasks in a task-dependent manner (Figure~\ref{fig:fig1} \textit{Left}), much like the widely observed data and parameter scaling trends. 
In fact, on knowledge-intensive tasks, a small retrieval-based LM can outperform its larger LM-only counterparts. %even with a moderate-scale datastore; 
Second, since indexing a datastore is cheaper than training on the same amount of data, retrieval-based LMs enable better compute-optimal scaling trends, where they  achieve superior performance than LM-only models at the same training cost (Figure~\ref{fig:fig1} \textit{Right}).

Through our analyses (\S\ref{sec:analysis}), we show that retrieval-based LMs are capable of automatically retrieving documents that are in-domain to the query, which allows them to reap the benefits of larger, broader datastores. In addition, data quality filters and improved retrieval methods can further enhance our observed scaling trends.

Overall, our results show that datastore size should be considered as an integral part of LM efficiency and performance trade-offs. 
To spur future research, we open-source \dsname (including the raw passages, the embedding, and the index) and all code (including our evaluation suite and pipeline for efficiently studying datastore scaling) at \url{https://github.com/RulinShao/retrieval-scaling}.
\begin{table}[t]
    \centering
    \caption{\textbf{Comparison with prior work, ordered by datastore size.}
    `\# Tokens' indicates the number of tokens in the datastore using the \llama 2 tokenizer~\citep{touvron2023llama}. 
    The asterisk (*) denotes that the datastore is not evaluated on downstream tasks. 
    \dsname is the largest open-sourced datastore and covers a broad spectrum of domains.
    \vspace{.2em}
    }
    \label{tab:prior_datastores}
    \resizebox{\textwidth}{!}{%
    \begin{tabular}{l @{\hskip 0in} r l  @{\hskip -0.3in} r}
    \toprule
        \textbf{Reference} & \# Tokens & Data Sources & \textbf{Open sourced}\\
    \midrule
        \textsc{Atlas}~\citep{izacard2022few} & $<$5B  & Wikipedia  &  \redcross \\
        \textsc{REALM}~\citep{guu-etal-2020-realm} & $<$5B  & Wikipedia  &  \redcross \\
        \textsc{RALM}~\citep{ram-etal-2023-context} & $<$5B  & Wikipedia  & \greencheck \\
        \textsc{Self-RAG}~\citep{asai2024selfrag} & $<$5B & Wikipedia & \greencheck\\
        \textsc{RePlug}~\citep{shi2023replug} & 47B & The Pile  & \greencheck \\
        \textsc{RA-DIT}~\citep{lin2023radit} & 79B & Wikipedia, CommonCrawl &  \redcross\\
        \textsc{SPHERE}~\citep{piktus2022web} & 90B & CCNet & \greencheck \\
        \textsc{RETRO++}~\citep{wang2024instructretro} &  330B* & The Pile, CommonCrawl, RealNews, CC-Stories  & \redcross \\
        \textsc{InstructRetro}~\citep{wang2024instructretro} & 1.2T* & Wikipedia, CommonCrawl, RealNews, CC-Stories, Books  & \redcross \\
        \textsc{RETRO}~\citep{pmlr-v162-borgeaud22a} & 1.7T* & MassiveText~\citep{rae2022scaling}  & \redcross \\
        \midrule
        \dsname (Ours) &  \textbf{1.4T} & 8 domains, listed in Table~\ref{tab:largest_datastore}  & \greencheck \\
    \bottomrule
    \end{tabular}
    } \vspace{-.3em}
\end{table}

\section{Related Work}\label{sec:background}
\textbf{Retrieval-based LMs.} 
% Unlike \textit{parametric} language models, which can only draw upon knowledge encoded in their parameters during inference, retrieval-based language models are augmented with a large-scale collection of external textual data---a \textit{datastore}---which can be flexibly updated or swapped out at any time. The landscape of retrieval-based language models is broad: many design choices exist for such systems, from the type of model architecture, to the training regime, to the manner of context incorporation, among others~\citep{guu-etal-2020-realm,khandelwal2020generalization,izacard2022few}.\footnote{We refer to \citet{asai2023position} for a more comprehensive discussion.} Recently, a subclass called \textit{retrieve-in-context} language models (RIC-LMs) have gained popularity due to their efficacy and simplicity~\citep{min2023silo,ram-etal-2023-context, shi2023replug}; here, retrieval is integrated to the input of large LMs in an off-the-shelf manner. RIC-LMs do not require further training or architectural adjustments, and is compatible with any (black-box or otherwise) LM.  
Unlike parametric LMs that only use data during training, retrieval-based LMs can access data through a datastore during inference (see \citet{asai2023position} for a review). 
% \pw{that use text data only during training; otherwise it sounds like you're focusing on how it only uses \emph{text} data as opposed to other modalities}
% There are many design choices to consider for the type of architecture, training regime, and manner of context incorporation, among others. There are a variety of model architectures and training methods for retrieval-based LMs;
% \pw{a bit redundant with the previous sentence, can just cite \citet{asai2023position} in the previous sentence} we refer to \citet{asai2023position} for more background. 
We focus on retrieve-in-context language models (RIC-LMs), which 
% uses a retrieval model \pw{`retrieval model' sounds a bit close to `retrieval-based model' so perhaps just say it retrieves a small set of documents and then concats them?} to find 
retrieves a small set of documents from the datastore and feeds a concatenation of them as an input to the LM~\citep{ram-etal-2023-context,shi2023replug}. The RIC-LM approach is simple and allows the use of off-the-shelf retrievers and LMs, even with only black-box access.

\textbf{Scaling the retrieval datastore.}
Prior work on retrieval-based LMs often focused on specific aspects of LMs such as factuality and attribution. In addition, they typically use limited-size, single-domain datastores such as Wikipedia, which is on the order of a few billion tokens (Table~\ref{tab:prior_datastores}). 
Scaling the datastore remains relatively underexplored, with two notable exceptions. 
First, \citet{pmlr-v162-borgeaud22a} proposed a new RETRO transformer architecture for retrieval-based LMs and built a 1.75 trillion token datastore sourced from the proprietary data introduced in \citet{rae2022scaling}. 
However, RETRO and its follow-up work, \textsc{RETRO++}~\citep{wang-etal-2023-shall} and \textsc{InstructRETRO}~\citep{wang2024instructretro}, use this trillion-token datastore solely for language modeling evaluation, while using a small task-specific Wikipedia datastore for downstream task evaluation.
Since RETRO-based datastores are not fully open-sourced, it is challenging to replicate work based on RETRO to assess the effectiveness of datastore scaling.
Second, \citet{piktus2022web} proposed \textsc{Sphere}, an open-sourced 90 billion-token datastore curated from CCNet~\citep{wenzek-etal-2020-ccnet}. However, their evaluation on downstream tasks such as KILT~\citep{petroni-etal-2021-kilt} suggests that \textsc{Sphere} does not always outperform a small, in-domain datastore like Wikipedia.

In contrast, we present the first study on the downstream performance of trillion-token scale datastores, including an analysis of compute-optimal scaling trends using retrieval-based LMs with different sizes of datastores, models, and pretraining corpora. 
Our work is fully open-source and can be replicated on a limited computational budget, enabling research on trillion-token datastores to be more broadly accessible.

\section{\dsname and our Datastore Scaling Pipeline}\label{sec:method}

\subsection{\dsname: A Trillion-Token Datastore With a Diverse Domain Composition} \label{sec:datastore}
% \begin{table}[t]
\begin{wraptable}{r}{0.54\textwidth}
\setlength{\tabcolsep}{1.1pt}
\vspace{-1em}
\caption{The domain-wise data composition of \dsname. RPJ denotes \textsc{RedPajama V1}~\citep{together2023redpajama}, CC denotes Common Crawl, Wiki denotes Wikipedia. 
}
\label{tab:largest_datastore}
\centering
\vspace{-.3em}
\footnotesize
\small
\begin{tabular}{llr} \toprule
    \bf{Domain} & \bf{Datasets} & \bf{Size (B)}\\
\midrule
\textsc{Books} & RPJ Books & 26.3 \\
\textsc{STEM} & peS2o, RPJ ArXiv & 97.7 \\
\textsc{Encyclopedia} & DPR 2018 Wiki, RPJ 2022 Wiki & 31.9 \\
\textsc{Forum (Q\&A)} & RPJ StackExchange & 20.2 \\
\textsc{Code} & RPJ Github & 52.8 \\
\textsc{Math} & OpenWebMath, NaturalProofs & 14.1  \\
\textsc{Biomedical} & PubMed & 6.5 \\ \midrule
\textsc{General Web} & RPJ CC (2019--2023), RPJ C4 & 1191.7 \\
\midrule
\textbf{Total} & & 1441.2\\
\bottomrule
\vspace{-1em}
\end{tabular}

\end{wraptable}

\dsname encompasses both \textit{domain-specific} data and \textit{general web} data (Table~\ref{tab:largest_datastore}). 
Domain-specific data comes from specialized sources, and tends to be smaller but higher in quality. We cull from a mix of data-rich domains: 
\textbf{books} which span a variety of genres~\citep{together2023redpajama}; open-access \textbf{scientific papers}~\citep{lo-wang-2020-s2orc,peS2o,together2023redpajama}; \textbf{encyclopedic articles}~\citep{karpukhin2020dense, together2023redpajama}; \textbf{community questions and answers} from StackExchange~\citep{together2023redpajama}; \textbf{code} from GitHub~\citep{together2023redpajama}; \textbf{mathematical webpages}~\citep{paster2023openwebmath} and \textbf{mathematical language}~\citep{welleck2021naturalproofs}; \textbf{biomedical articles}~\citep{pubmed_baseline_2023}.
On the other hand, general web data is sourced from Common Crawl snapshots at five time periods (07/2019, 05/2020, 04/2021, 05/2022, 06/2023) and C4~\citep{raffel2020exploring}.

\subsection{Studying Datastore Scaling with the \dsname Pipeline}
\label{sec:scaling_pipeline}
Studying datastore scaling requires constructing datastores of varying sizes and varying compositions from the raw text corpus. This involves the following operations:
\textbf{data filtering}, including deduplication, decontamination, and quality filters \citep{soldaini2024dolma};
\textbf{data subsampling}, which randomly subselects a $p$-fraction of the text corpus to achieve the specified size; %potentially with different random seeds to study the variability across different subsets.
\textbf{indexing}, which embeds the data using an auxiliary model and builds a searchable index;
\textbf{document retrieval}, which uses the index to find the top-$k$ relevant documents for each test query\footnote{Reranking is optionally applied, which reranks the retrieved documents based on a more effective but usually more expensive heuristic \citep{nogueira2019passage, sachan-etal-2022-improving}.
};
and \textbf{top-$k$ evaluation}, which uses the top-$k$ documents per test query to augment the retrieval-based model.
A naive approach is to run these operations in the aforementioned order for each datastore, and build separate datastores for all combinations of subsampled datastore sizes, random seeds, and other experimental variations. 
However, this naive approach is prohibitively computationally expensive at the trillion-token datastore scale because it repeats expensive operations, as shown in Figure~\ref{fig:ds-pipeline} (top).

\begin{figure}[t!]
    \centering
    \includegraphics[width=1\linewidth]{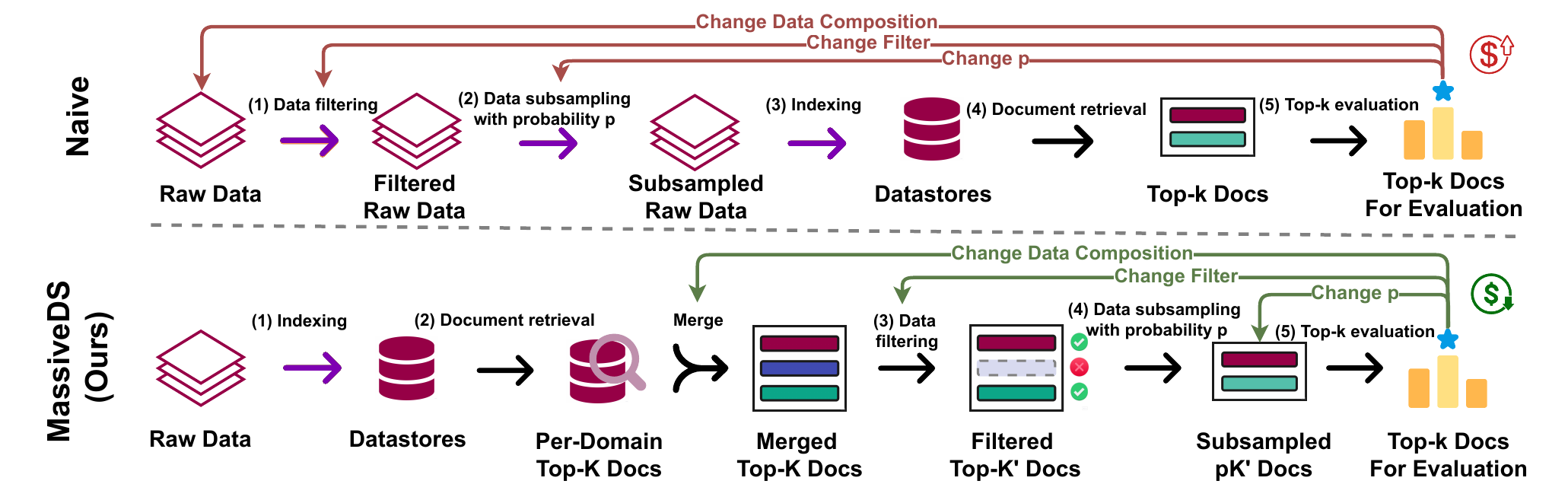}
    \vspace{-6mm}
    \caption{\textbf{Comparison between the \dsname pipeline and a naive pipeline for studying datastore scaling (\S\ref{sec:scaling_pipeline}).} 
    The green and red arrows indicate repeated operations. In the naive pipeline, these operations are more expensive because they require repeating expensive steps such as rebuilding the datastore index. 
    In the \dsname pipeline, these operations are cheaper because they repeat fewer steps and are only run on the retrieved top-K documents instead of the full datastore. Datastore-level operations are represented by purple arrows, while document-level operations, repeated for every query, are represented by black arrows.
    }
    \label{fig:ds-pipeline}
    
\end{figure}

To make the datastore scaling study computationally feasible, 
we develop the \dsname pipeline (Figure~\ref{fig:ds-pipeline} bottom).
The key idea is to reorder the above operations such that the most expensive ones---indexing and retrieval---are run only once at the start and then shared across all subsequent datastore variants. 
Other operations with many variants---such as subsampling, deduplication, and decontamination---are performed as late as possible to minimize repeating subsequent steps. 
To enable this, we first retrieve a relatively large number ($K \gg k$) of documents for each query and then apply the subsequent operations to these sets of retrieved documents, rather than the entire datastore. 
Altogether, this pipeline reduces compute requirements by more than an order of magnitude, enabling us to conduct the datastore scaling study on a modest compute budget. 
In Appendix~\ref{app:subsample}, we show that the results from the \dsname pipeline are equivalent to the results from the naive pipeline with high probability, where the randomness comes from the subsampling procedure. 
We provide more details on the steps in the \dsname pipeline in Appendix~\ref{app:datastore} and detailed configuration in Appendix \ref{app:pipeline}.

\paragraph{Note on the number of tokens in the datastore.} 
In our figures, we plot the datastore size (on the x-axis) by multiplying the total number of tokens in the raw data pool by the subsampling fraction $p$. A more accurate representation would be the number of tokens in the filtered data pool; however, we do not know the size of the filtered data pool as we apply data filtering on the retrieved documents instead of the raw data for computational efficiency. 
As a result, the number of tokens we plot on our x-axis is proportionally larger, i.e., if a $p_f$ fraction of the data is filtered out ($0 < p_f \leq 1$), then the actual number of tokens should also be multiplied by $p_f$. 
Since we plot datastore size on a log axis, this corresponds to a constant shift and does not change the scaling trends.
\section{Datastore Scaling with Retrieval-Based Language Models}\label{sec:results}

\subsection{Experimental Setup}\label{sec:evaluation}

\paragraph{Model details.}
Following prior work~\citep{izacard2022few, liu2023lost, ram-etal-2023-context,shi2024incontext,xu2023recomp, asai2024selfrag}, we use \textsc{Contriever-MSMARCO} \citep{izacard2022unsupervised}, which represents every document in the datastore as a dense vector,
as our retriever.
We ablate the choice of retriever in Appendix~\ref{app:retriever}; we found that \textsc{Contriever-MSMARCO} performs on par with, or even better than, more recent larger retrievers. 
We augment input examples with retrieved documents at the granularity of 256-word chunks. 
We study datastore scaling performance with the \llama-2, \llama-3~\citep{touvron2023llama}, \pythia~\citep{biderman2023pythia}, and \olmo~\citep{groeneveld2024olmo} model families.

\paragraph{Evaluation.} We consider both language modeling and downstream tasks for evaluation. 
% {\bf Language modeling evaluation.}~~
We evaluate language modeling perplexity on data from two domains: (1) general web data  sampled from \textsc{RedPajama} \citep{together2023redpajama}; (2) scientific paper data sampled from \textsc{S2ORC} \citep{lo-wang-2020-s2orc}.
For downstream tasks, our evaluation encompasses general-knowledge, medical, math, and science domains including the following tasks.
TriviaQA (\textbf{TQA}; ~\citealt{joshi2017triviaqa}) comprises trivia questions with answers sourced from Wikipedia and the web.
Natural Questions (\textbf{NQ}; ~\citealt{kwiatkowski2019natural,lee-etal-2019-latent}) comprises search engine queries and human-annotated answers from Wikipedia.
Massive Multitask Language Understanding (\textbf{MMLU};~\citealt{hendryckstest2021}) comprises general-purpose, multi-task reasoning questions.
\textbf{MedQA}~\citep{jin2020disease} comprises medical multiple-choice questions sourced from professional medical exams.

\begin{figure}[t]
    \centering
    \includegraphics[width=\linewidth]{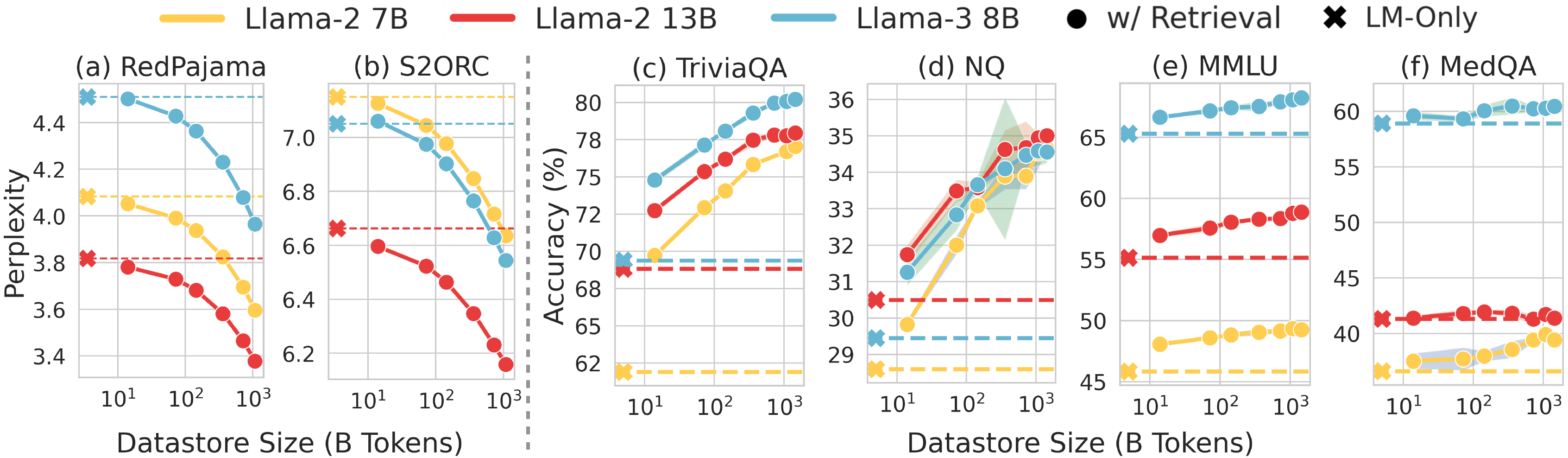}
    \caption{\textbf{Scaling performance on upstream and downstream tasks with \dsname, in comparison with LM-only performance.} \textit{Left:} Perplexity (PPL) scaling performance on \textsc{RedPajama} (multi-domain pretraining corpus) and \textsc{S2ORC} (scientific papers). 
    \textit{Right:} Downstream scaling performance on TriviaQA (TQA), Natural Questions (NQ), MMLU, and MedQA.
    }
    \label{fig:scaling}
\end{figure}

\paragraph{Implementation details.}
For evaluation with retrieval, we concatenate the top $k=3$ documents in reverse order, so that higher-ranked documents are positioned closer to the query. 
For downstream tasks, we evaluate models
via 5-shot prompting, and we prepend the retrieved documents before the few-shot examples, followed by the question.
We do not apply reranking for our main experiments in Section \ref{sec:results}; we study the effect of rerankers in Section~\ref{sec:analysis-reranker}. 
More details, including decontamination measures, are in Appendix \ref{app:impl}.

\subsection{Datastore Scaling Results on Language Modeling and Downstream Tasks}

\paragraph{Finding 1: Datastore scaling significantly helps language modeling.}
Figures \ref{fig:scaling}(a) and (b) show perplexity curves as a function of datastore size on general web and scientific papers, respectively.
Retrieval is strictly beneficial for language modeling: the LM-only baselines (denoted by dashed lines) 
show the highest perplexity across all models and evaluation datasets. 
Scaling up the datastore reduces perplexity without signs of saturation,
suggesting that further scaling is likely to yield additional improvements.
Further, datastore scaling enables small models to outperform their larger LM-only counterparts: 
when retrieving from \dsname at the largest scale, \llama-2 7B outperforms the LM-only performance of its larger \llama-2-13B counterpart. 
Interestingly, we find \llama-3 8B underperforms \llama-2 7B on RedPajama. This aligns with the observations in \citet{xiao2023smoothquant} and we discuss potential reasons in Appendix~\ref{app:explain_ppl}.

\paragraph{Finding 2: Datastore scaling improves performance on several downstream tasks, and the degree of improvement is task-dependent.}
Figure~\ref{fig:scaling}(c)--(f) show the performance on four downstream tasks as a function of datastore size.
Datastore scaling brings major improvements to knowledge-intensive question answering tasks such as NQ and TQA, where 
retrieval-based LMs significantly outperform LM-only baselines across all scales, and performance monotonically increases with datastore scale. 
For instance, a \llama-2 7B model that retrieves from fewer than 100B tokens can outperform both its 13B LM-only counterpart and the more capable LM-only \llama-3 8B on TQA and NQ,
indicating the effectiveness of storing knowledge in the datastore.

On MMLU, a multi-subject, reasoning-heavy benchmark, datastore scaling monotonically improves performance across all model scales.
Results are more mixed for MedQA, where only the weaker \llama-2 7B benefits more from datastore scaling. 
For both tasks, datastore scaling does not help the smaller model do better than the larger model. 
We suspect that this is due to task difficulty and the lack of in-domain data sources: both MMLU and MedQA are more oriented toward reasoning rather than pure factual recall, which poses bigger challenges for both the retriever and the LM. Additionally, \dsname only contains a small subset of web data and medical papers which may not cover all necessary information to answer these questions. We defer to future work to explore better data sources for these tasks.

\begin{figure}[t]
    \centering
    \includegraphics[width=\linewidth]{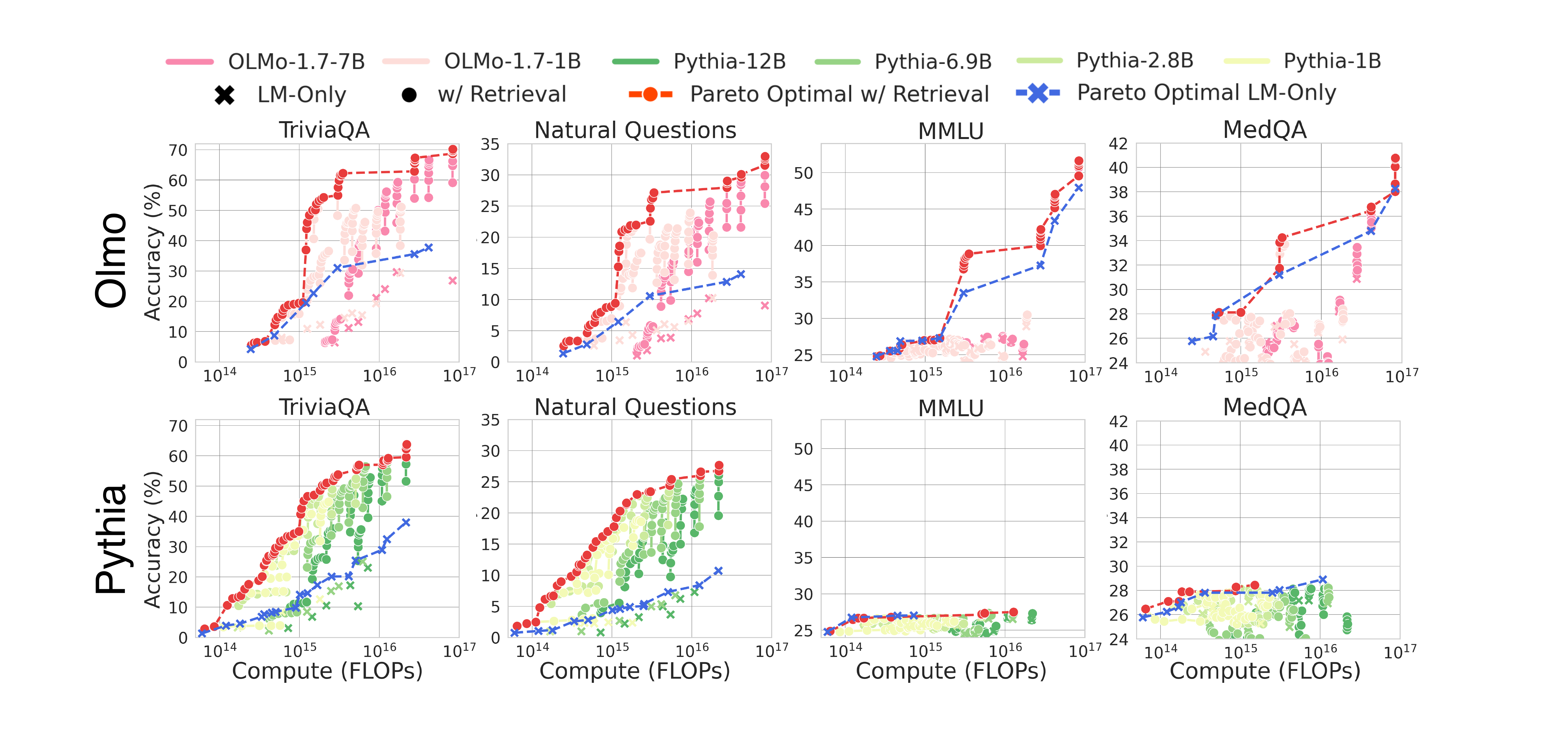}
    \caption{\textbf{Compute-optimal scaling curves for retrieval-based and LM-only models of varying datastore sizes, model sizes, and pretraining corpus sizes (detailed setup in \S \ref{app:compute_scaling_setup}).}
    Darker green or pink indicate larger model sizes for \pythia\ and \olmo\ respectively; crossmarks in matching colors represent the same model size trained with varying numbers of tokens; each crossmark corresponds to a datastore scaling curve of lined dots similar to the ones in Figure~\ref{fig:scaling}. The Pareto-optimal points are highlighted in red for retrieval-based LMs and blue for LM-only. Within a fixed computational budget (represented on the x-axis), retrieval-based LMs achieve superior performance, which remains unsaturated along the datastore scaling dimension. Pythia models do not exhibit meaningful scaling curves on MMLU and MedQA that require advanced reasoning abilities.
    }
    \label{fig:training_law}
\end{figure}

\subsection{Compute-Optimal Scaling with Retrieval-Based Language Models}
Next, we study performance as a function of total training-time compute and show that retrieval-based LMs achieve superior compute-optimal performance compared to LM-only models.

\paragraph{Use of intermediate checkpoints.} We use the intermediate checkpoints of \pythia\ and \olmo\ as an approximation of models trained on different numbers of tokens, as detailed in Appendix~\ref{app:compute_scaling_setup}. 
These intermediate checkpoints share the same learning rate scheduler, with a fixed maximum number of training steps that equals or exceeds the number of steps they have been actually trained for,
and therefore the performance of these intermediate checkpoints (with or without retrieval) might be lower than otherwise attainable with the same amount of compute.
However, pretraining LMs from scratch for all combinations of model sizes and numbers of pretraining tokens is prohibitively expensive for an academic budget.

\paragraph{FLOPs calculation.} We detail the FLOPs computation for datastore construction and pretraining in Appendix~\ref{app:compute_scaling_setup}.
Datastore construction is much cheaper than pretraining because it only requires one forward pass on all tokens in the datastore with a small retriever (177M parameters in our setup), while pretraining requires a forward pass and a backward pass on pretraining tokens with an LM that can be much larger than the retriever.
As we use a flat index, no additional operations are required at the indexing step, so the number of FLOPs for datastore construction equals the number of FLOPs for embedding. We note that other types of indexing, e.g., inverted file indexing (IVFADC)~\citep{jegou2011searching}, may require additional FLOPs during construction and fewer FLOPs at inference.
We first focus on training-time compute and discuss inference cost at the end of the section.

We show the scaling curves against computational cost of retrieval-based LMs and LM-only performance on downstream tasks in Figure~\ref{fig:training_law}. The Pareto-optimal points for retrieval-based and LM-only settings are highlighted in red and blue, respectively.

\paragraph{Finding 3: Retrieval-based LMs outperform LM-only models for the same compute budget.}
With the same training-time compute, retrieval-based LMs achieves superior performance than LM-only models, indicating offloading FLOPs from pretraining to datastore construction can result in better performance. Therefore, we conjecture that storing factual knowledge in a datastore is more computationally efficient than memorizing factual knowledge in model parameters at training time. We note this claim assumes the LM has enough capacity to reason with the retrieved knowledge. Otherwise, an LM may fail to utilize the retrieved knowledge, which we further discuss in Finding 5.

\paragraph{Finding 4: Even weak language models can benefit significantly from retrieval on knowledge-intensive tasks that measure factual recall.
}
Surprisingly, we find that retrieval-based \pythia\ models (trained on up to 300B tokens) and \olmo-1.7 models (trained on up to 2T tokens\footnote{\olmo-1.7 1B is trained on up to 3T tokens and \olmo-1.7 7B is trained on up to 2T tokens.}) have a similar compute-optimal scaling trajectory on TriviaQA and NQ (left columns in Figure~\ref{fig:training_law}), despite \pythia\ being trained on less and lower-quality data. 
Both TriviaQA and NQ evaluate  factual recall without complex reasoning. 
When the right information is provided in context using retrieval, the LM only needs to extract the answer;
therefore, these results suggest that the ability to extract factual knowledge for simple factual question answering is obtained early in training.

\paragraph{Finding 5: Retrieval shows benefits for reasoning-intensive tasks with capable \olmo\ models, but it does not help when the language model is not sufficiently advanced such as \pythia.}
As shown on the right side of Figure~\ref{fig:training_law}, datastore scaling gives marginal benefits on MMLU and MedQA for \pythia\ models where the performance stays around random even at the 12B model size. However, \olmo, which is trained on more and better data,  consistently benefits from retrieval on both tasks.
We thus conjecture that training on higher-quality data, as \olmo\ applied in pretraining, could help the model benefit more from retrieved documents for reasoning-heavy tasks. 
Beyond reasoning ability, access to the right data sources for the datastore might be critical. For example, we observe fewer benefits from retrieval on MMLU and MedQA in comparison with TriviaQA and NQ. 
This may indicate that MMLU and MedQA need more specific data, such as relevant textbooks for MMLU and biomedical literature for MedQA, which are currently not included in \dsname.

\paragraph{Discussion on inference cost.}
The compute-optimal scaling study described above focuses only on the cost of training.
For inference, prepending retrieved documents in context increases inference cost due to the extended context length and additional computation required for the search.
On the flip side, inference cost can be reduced by switching from a larger to a smaller LM, especially since a small LM augmented with a datastore can match or outperform its larger counterparts on some tasks.
We also note that there is emerging work on accelerating retrieval search and designing efficient serving strategies for retrieval-based LMs, such as \citet{cao2023btr}. 
We leave a study of inference-compute-optimal scaling to future work.
\section{Analysis}\label{sec:analysis}

 \subsection{Effects of Data Composition}
 
\begin{table}[t]
    \centering
    \caption{\textbf{Downstream and upstream performance comparison between \dsname for retrieval versus single-domain datastores with \llama-2 7B.} ``SE'' is short for StackExchange. The best performance is highlighted in \textbf{bold} and the second best is \underline{underlined}. We show the diverse domain coverage in \dsname consistently improve the performance across tasks.
    }
    \label{tab:single_domain_results}
    \setlength{\tabcolsep}{3.6pt}
    \resizebox{\textwidth}{!}{
    \begin{tabular}{lrrrrrrrrrrr}
    \toprule
        \multirow{2}{*}{\textbf{Tasks}} & \multirow{2}{*}{\textbf{LM-Only}} & \multirow{2}{*}{\textbf{PubMed}} & \multirow{2}{*}{\textbf{MATH}} & \multirow{2}{*}{\textbf{peS2o}} & \textbf{DPR} & \multicolumn{5}{c}{\textbf{RedPajama}} & \multirow{2}{*}{\textbf{\dsname}} \\
        & & & & & \textbf{Wiki} & Wiki & Books & ArXiv & SE & Github & \\
    \midrule
        TQA $\uparrow$ &  64.1 &  64.5 &  65.5 &  65.6 &  72.6 &  \underline{72.9} &  70.5 &  62.3 &  64.7 &  64.2 &  \textbf{77.0} \\
        NQ $\uparrow$ &  26.6 &  26.7 &  26.4 &  26.9 &  \textbf{34.6} &  33.8 &  28.0 &  26.4 &  27.0 &  26.4 &  \textbf{34.6} \\
        MedQA $\uparrow$ &  36.6 &  37.8 &  36.5 &  38.1 &  38.5 &  38.4 &  \textbf{39.8} &  36.9 &  35.4 &  36.1 &  \underline{39.4} \\
        MMLU $\uparrow$ &  45.8 &  46.8 &  47.5 &  47.4 &  48.3 &  48.1 &  \underline{48.3} &  45.6 &  46.2 &  45.9 &  \textbf{49.3} \\
    \midrule
        RedPajama (PPL) $\downarrow$ &  4.09 &  4.06 &  4.08 &  4.08 &  4.06 &  3.99 &  4.01 &  \underline{3.87} &  4.01 &  3.95 &  \textbf{3.50} \\
        S2ORC (PPL) $\downarrow$ &  7.18 &  7.05 &  7.10 &  6.71 &  7.08 &  7.11 &  7.14 &  \underline{6.64} &  7.08 &  7.11 &  \textbf{6.57} \\
    \bottomrule
    \end{tabular}
    }
\end{table}

\paragraph{Finding 6: \dsname matches or outperforms single-domain datastores.}
The default setting in prior work is to use a single-domain datastore that is in-distribution to the downstream task. 
In practice, however, it is often difficult to determine and curate a datastore that is perfectly in-distribution for a downstream task, and even if we can, it limits the generality of the retrieval-based model to that task.

\begin{wrapfigure}{r}{0.5\textwidth}
    \centering
    \vspace{-2mm}
    \includegraphics[width=0.9\linewidth]{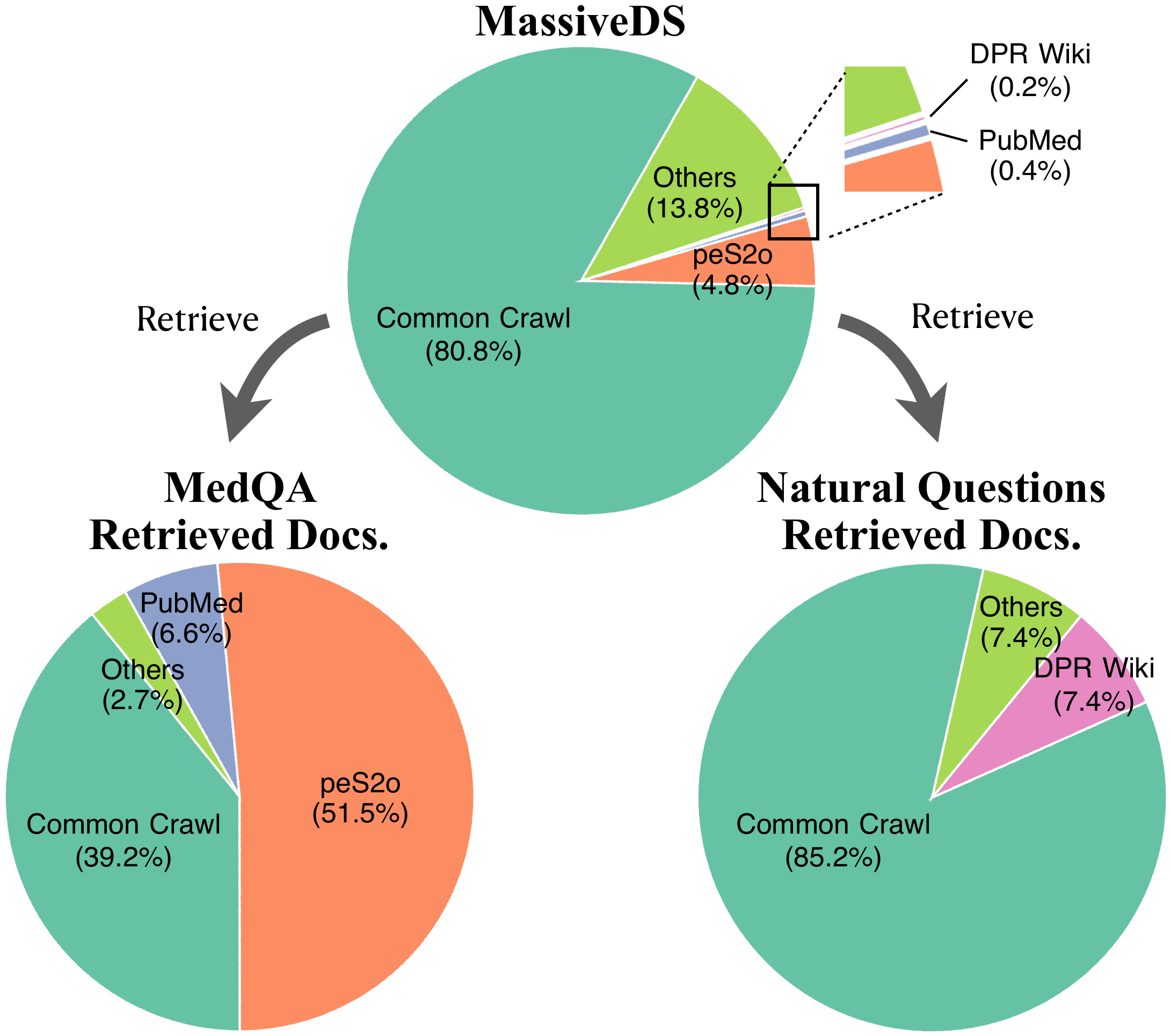}
    % \vspace{-2mm}
\caption{\textbf{Retrievers tend to retrieve from relevant domains.} We plot the domain composition of \dsname vs.~the top-$1$ retrieved documents for evaluation examples from MedQA and NQ. The retriever retrieves more frequently from domains that are relevant to the evaluation examples.
}
    \label{fig:source}
\end{wrapfigure}

In Table \ref{tab:single_domain_results}, we compare \dsname with single-domain datastores.
\dsname significantly outperforms these in-domain datastores on language modeling, as well as TQA and MMLU, while matching performance on NQ and MedQA.\footnote{NQ answers are annotated based on data from a 2018 snapshot of Wikipedia, so the Wikipedia datastore is exactly the right datastore for this task.}
In Figure~\ref{fig:source}, we show that the retriever tends to retrieve from the relevant domain even in the presence of out-of-domain data in the datastore: for NQ, it retrieves relatively more frequently from Wikipedia and web sources, whereas for MedQA, it retrieves more frequently from scientific papers from peS2o \citep{peS2o}.
Thus, the retriever can maintain robustness to out-of-domain data in the datastore; 
this aligns with similar findings on kNN-LM~\citep{khandelwal2020generalization}, another type of retrieval-based LM, in \citet{shao2023retrievalbased}.
Overall, these results show that retrieving from broad datastores like \dsname can simultaneously improve performance across multiple domains, paving the path towards general-purpose retrieval-based models.

\subsection{Effects of Reranking}\label{sec:analysis-reranker}
Retrieving the most relevant documents from a large-scale datastore remains a challenging problem.
To study how improving the retrieval process impacts datastore scaling trends, we first retrieve 500 documents from \textsc{Contriever}, rerank them using a more computationally expensive model~\citep{ram-etal-2023-context, sachan-etal-2022-improving}, and take the final top-$3$ reranked documents for evaluation.
Specifically, we use a \textbf{cross-encoder reranker}, which encodes a concatenation of a query and document and returns a similarity score~\citep{nogueira2019passage}. 
We choose \textsc{Mini-LM-L12 V2}, a BERT-based cross-encoder\footnote{\texttt{https://www.sbert.net/docs/pretrained-models/ce-msmarco.html}} that is trained for passage ranking, following \citet{izacard2022unsupervised}.
Additionally, we use a \textbf{lexical oracle reranker}, which uses the gold answer, as an upper bound on the potential benefit realizable by a better reranker or retriever for knowledge-intensive question-answering.
The oracle reranker scores each document based on whether the gold answer is included in the document and if not, the fraction of unigram overlap between the document and the answer.

Figure~\ref{fig:oracle_reranker_perf} reports scaling trends on TQA and NQ of RIC-LM with Llama2-7B.
While the cross-encoder-based reranker improves performance on TQA and NQ, a notable gap persists between the oracle reranker and the cross-encoder-based reranker. These suggest that improving either retrieval or reranking can further boost the scaling performance of retrieval datastores.
Improving the retriever for more reasoning-heavy tasks such as MMLU and MedQA remains an open problem~\citep{behnamghader2022can} that we leave to future work.

\begin{figure}
    \centering
    \includegraphics[width=\linewidth]{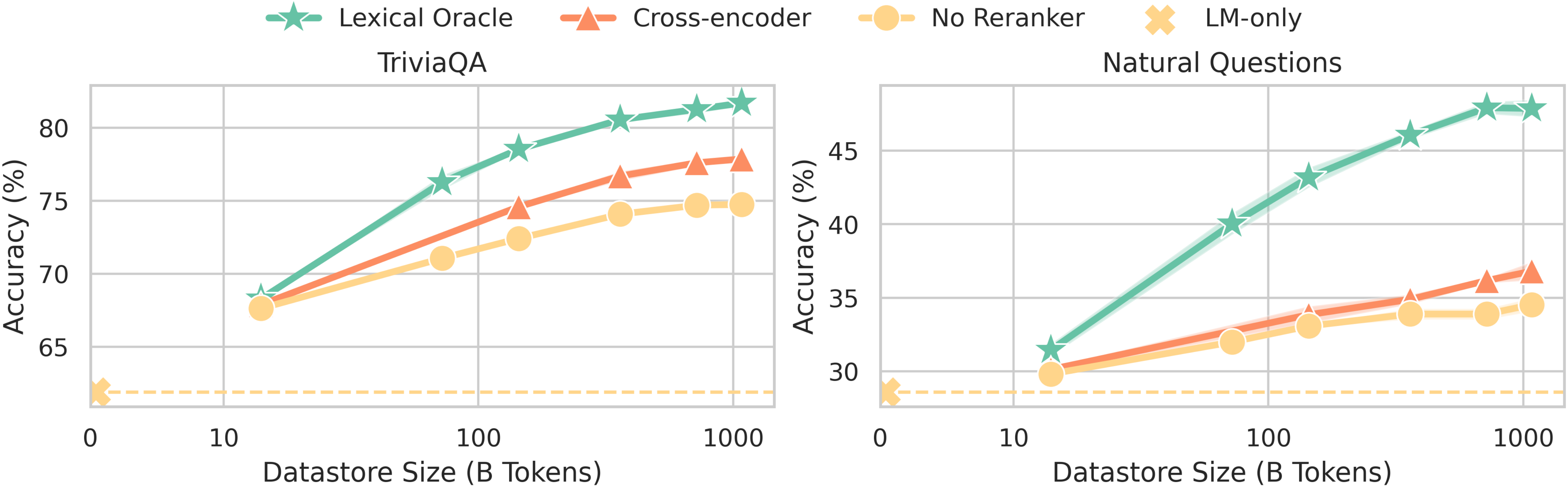}  
    \caption{\textbf{Scaling trends on TriviaQA and NaturalQuestions using different rerankers (Section~\ref{sec:analysis-reranker}).} ``Lexical Oracle'' represents the oracle reranker that reorders documents based on lexical overlap with the ground-truth answer. ``Cross-encoder'' represents a neural reranker which uses a cross-encoder model. Both the oracle lexical reranker and the neural reranker boost scaling trends, indicating the potential improvement space by enhancing the retrieval quality. 
    }
    \label{fig:oracle_reranker_perf}
\end{figure}

\subsection{Effects of Datastore Filtering}\label{sec:analysis-data}

\begin{figure}
    \centering
    \includegraphics[width=1\linewidth]{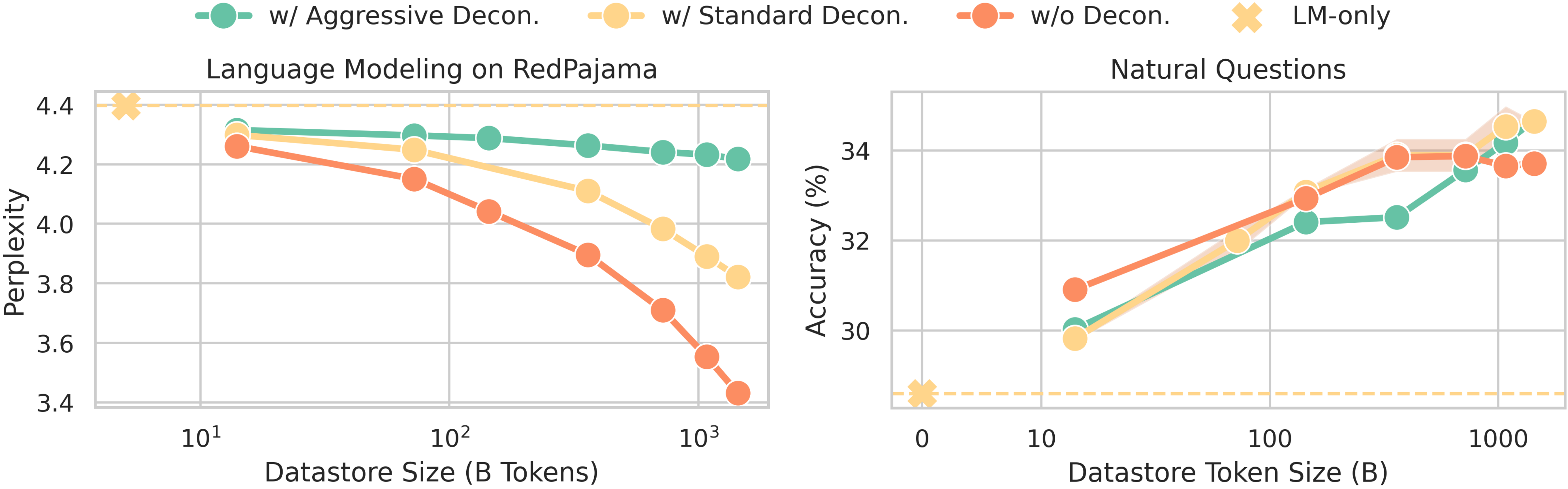}
    \caption{Ablation study on data decontamination. `Aggressive Decon.' removes a document as long as it has an 8-gram (i.e., 1.5\% of the answer length) continuous overlap with the answer. `Standard Decon.'---our default setup---removes a document when it either has a 32-gram (i.e., 6.2\% of the answer length) continuous overlap or an 80\%+ Jaccard similarity with the answer. We find decontamination impacts the language modeling performance a lot but not the downstream task.
    }
    \label{fig:data-ranking}
\end{figure}

\noindent
\textbf{Data decontamination.} Data decontamination is a crucial concern when evaluating LMs, especially in retrieval-based LMs that can retrieve the test data verbatim during inference \citep{pmlr-v162-borgeaud22a}.
By default (Section~\ref{sec:results}), we perform decontamination by filtering documents with 80+\% 13-gram Jaccard similarity for downstream tasks and 32-gram longest sequence overlap for perplexity, 
which we call \textbf{standard decontamination}. Prior work such as RETRO~\citep{pmlr-v162-borgeaud22a} only used 80+\% 13-gram Jaccard similarity for decontamination. However, we find the additional 32-gram longest sequence overlap decontamination is critical for removing near-identical documents.

To study the impact of varying levels of data decontamination, we compare standard decontamination with two additional methods: (1) \textbf{no decontamination} and (2) \textbf{aggressive decontamination}, which uses 8-gram longest sequence overlap for all tasks. This is a strict filter, as 8-gram overlap occurs frequently even when documents are not nearly identical.

Figure~\ref{fig:data-ranking} reports the performance of the \llama-2 7B model on language modeling and the Natural Questions dataset using different decontamination methods.
The scaling trend shows significantly better language modeling performance without decontamination, which worsens with more aggressive decontamination methods. This suggests that the benefits in language modeling primarily arise from lexical overlap.
However, retrieval continues to benefit language modeling performance even after aggressive decontamination—where no more than 8 continuous words overlap—indicating that semantically similar retrieved documents with minimal lexical overlap can still enhance language modeling.
Decontamination does not significantly affect NQ performance, likely because there is less contamination of NQ in the datastore. Interestingly, decontamination decreases performance with smaller datastores, but improves final performance at larger scales. 

\paragraph{Data quality filtering.} In Appendix~\ref{app_sec:quality}, we study the impact of data quality filtering on \dsname, where we consider global data deduplication and a combination of 3 filters adapted from \textsc{Dolma}~\citep{soldaini2024dolma}: {whitespace filter};  {language filter}, and {alphanumeric filter}. We find deduplication is helpful to minimizing saturation as the datastore scales on NQ; intuitively, subsampling with higher $p$ increases the chance of seeing more duplicates. 
In addition, we observed that \textsc{Dolma} quality filters have a relatively limited effect. We hypothesize this is because the data sources we used in \dsname, such as RedPajama, have already gone through similar quality filtering processes and may not benefit much from applying additional filtering.

\section{Limitations and Discussion}

We conclude by discussing limitations and future directions.
First, while our pipeline allows us to study datastore scaling efficiently, our experiments are still limited by our available compute. In particular, our compute-optimal scaling studies are limited to model families like OLMo and Pythia that release intermediate model checkpoints, since full pretraining runs exceed our budget constraints. Similarly, we conduct the full scaling study with a single retriever, as changing the retriever necessitates re-indexing the entire datastore. It remains unclear how changes in the size and architecture of the retriever affect datastore scaling trends.

Second, although \dsname is large in size, it might still lack high-quality data for improving performance on more complex, reasoning-heavy tasks such as MMLU and MedQA.
Future work could study the effect of extending \dsname to more varied and higher quality data sources.

Lastly, our downstream evaluations are mostly on question-answering tasks whose outputs are either predefined choices or short form generations. We defer the evaluation on more tasks such as long-form generation and mathematical reasoning to future work.

Despite these limitations, our research shows that increasing the scale of data available at inference time can improve model performance, at lower training cost, on language modeling and a variety of downstream tasks. We expect that future work on improving retrieval-based models with large-scale datastores will lead to even larger improvements: for example, our analysis suggests that further improving the retrieval process, either through better retrievers or rerankers, could have a significant impact.

\section*{Acknowledgements}
We thank Hannaneh Hajishirzi, Scott Yih, Ian Magnusson, Matt Jordan, and Rui Xin for insightful discussions. We thank Stella Li for proofreading.
PWK is supported by the Singapore National Research Foundation and the National AI Group in the Singapore Ministry of Digital Development and Innovation under the AI Visiting Professorship Programme (award number AIVP-2024-001).

\bibliographystyle{neurips_2024}
\bibliography{sections/reference}

\begin{thebibliography}{63}
\providecommand{\natexlab}[1]{#1}
\providecommand{\url}[1]{\texttt{#1}}
\expandafter\ifx\csname urlstyle\endcsname\relax
  \providecommand{\doi}[1]{doi: #1}\else
  \providecommand{\doi}{doi: \begingroup \urlstyle{rm}\Url}\fi

\bibitem[Abbas et~al.(2023)Abbas, Tirumala, Simig, Ganguli, and Morcos]{abbas2023semdedup}
Amro Abbas, Kushal Tirumala, D{\'a}niel Simig, Surya Ganguli, and Ari~S Morcos.
\newblock Semdedup: Data-efficient learning at web-scale through semantic deduplication.
\newblock \emph{arXiv preprint arXiv:2303.09540}, 2023.

\bibitem[Asai et~al.(2024{\natexlab{a}})Asai, Wu, Wang, Sil, and Hajishirzi]{asai2024selfrag}
Akari Asai, Zeqiu Wu, Yizhong Wang, Avirup Sil, and Hannaneh Hajishirzi.
\newblock Self-{RAG}: Learning to retrieve, generate, and critique through self-reflection.
\newblock In \emph{The Twelfth International Conference on Learning Representations}, 2024{\natexlab{a}}.
\newblock URL \url{https://openreview.net/forum?id=hSyW5go0v8}.

\bibitem[Asai et~al.(2024{\natexlab{b}})Asai, Zhong, Chen, Koh, Zettlemoyer, Hajishirzi, and tau Yih]{asai2023position}
Akari Asai, Zexuan Zhong, Danqi Chen, Pang~Wei Koh, Luke Zettlemoyer, Hannaneh Hajishirzi, and Wen tau Yih.
\newblock Reliable, adaptable, and attributable language models with retrieval.
\newblock \emph{arXiv}, 2024{\natexlab{b}}.

\bibitem[Baevski \& Auli(2019)Baevski and Auli]{baevski2018adaptive}
Alexei Baevski and Michael Auli.
\newblock Adaptive input representations for neural language modeling.
\newblock In \emph{International Conference on Learning Representations}, 2019.
\newblock URL \url{https://openreview.net/forum?id=ByxZX20qFQ}.

\bibitem[BehnamGhader et~al.(2022)BehnamGhader, Miret, and Reddy]{behnamghader2022can}
Parishad BehnamGhader, Santiago Miret, and Siva Reddy.
\newblock Can retriever-augmented language models reason? the blame game between the retriever and the language model.
\newblock \emph{arXiv preprint arXiv:2212.09146}, 2022.

\bibitem[Biderman et~al.(2023)Biderman, Schoelkopf, Anthony, Bradley, O’Brien, Hallahan, Khan, Purohit, Prashanth, Raff, et~al.]{biderman2023pythia}
Stella Biderman, Hailey Schoelkopf, Quentin~Gregory Anthony, Herbie Bradley, Kyle O’Brien, Eric Hallahan, Mohammad~Aflah Khan, Shivanshu Purohit, USVSN~Sai Prashanth, Edward Raff, et~al.
\newblock Pythia: A suite for analyzing large language models across training and scaling.
\newblock In \emph{International Conference on Machine Learning}, pp.\  2397--2430. PMLR, 2023.

\bibitem[Bojanowski et~al.(2017)Bojanowski, Grave, Joulin, and Mikolov]{bojanowski2017enriching}
Piotr Bojanowski, Edouard Grave, Armand Joulin, and Tomas Mikolov.
\newblock Enriching word vectors with subword information.
\newblock \emph{Transactions of the Association for Computational Linguistics}, 5:\penalty0 135--146, 2017.
\newblock ISSN 2307-387X.

\bibitem[Borgeaud et~al.(2022)Borgeaud, Mensch, Hoffmann, Cai, Rutherford, Millican, Van Den~Driessche, Lespiau, Damoc, Clark, De~Las~Casas, Guy, Menick, Ring, Hennigan, Huang, Maggiore, Jones, Cassirer, Brock, Paganini, Irving, Vinyals, Osindero, Simonyan, Rae, Elsen, and Sifre]{pmlr-v162-borgeaud22a}
Sebastian Borgeaud, Arthur Mensch, Jordan Hoffmann, Trevor Cai, Eliza Rutherford, Katie Millican, George~Bm Van Den~Driessche, Jean-Baptiste Lespiau, Bogdan Damoc, Aidan Clark, Diego De~Las~Casas, Aurelia Guy, Jacob Menick, Roman Ring, Tom Hennigan, Saffron Huang, Loren Maggiore, Chris Jones, Albin Cassirer, Andy Brock, Michela Paganini, Geoffrey Irving, Oriol Vinyals, Simon Osindero, Karen Simonyan, Jack Rae, Erich Elsen, and Laurent Sifre.
\newblock Improving language models by retrieving from trillions of tokens.
\newblock In Kamalika Chaudhuri, Stefanie Jegelka, Le~Song, Csaba Szepesvari, Gang Niu, and Sivan Sabato (eds.), \emph{Proceedings of the 39th International Conference on Machine Learning}, volume 162 of \emph{Proceedings of Machine Learning Research}, pp.\  2206--2240. PMLR, 17--23 Jul 2022.
\newblock URL \url{https://proceedings.mlr.press/v162/borgeaud22a.html}.

\bibitem[Brown et~al.(2020)Brown, Mann, Ryder, Subbiah, Kaplan, Dhariwal, Neelakantan, Shyam, Sastry, Askell, Agarwal, Herbert-Voss, Krueger, Henighan, Child, Ramesh, Ziegler, Wu, Winter, Hesse, Chen, Sigler, Litwin, Gray, Chess, Clark, Berner, McCandlish, Radford, Sutskever, and Amodei]{NEURIPS2020_1457c0d6}
Tom Brown, Benjamin Mann, Nick Ryder, Melanie Subbiah, Jared~D Kaplan, Prafulla Dhariwal, Arvind Neelakantan, Pranav Shyam, Girish Sastry, Amanda Askell, Sandhini Agarwal, Ariel Herbert-Voss, Gretchen Krueger, Tom Henighan, Rewon Child, Aditya Ramesh, Daniel Ziegler, Jeffrey Wu, Clemens Winter, Chris Hesse, Mark Chen, Eric Sigler, Mateusz Litwin, Scott Gray, Benjamin Chess, Jack Clark, Christopher Berner, Sam McCandlish, Alec Radford, Ilya Sutskever, and Dario Amodei.
\newblock Language models are few-shot learners.
\newblock In H.~Larochelle, M.~Ranzato, R.~Hadsell, M.F. Balcan, and H.~Lin (eds.), \emph{Advances in Neural Information Processing Systems}, volume~33, pp.\  1877--1901. Curran Associates, Inc., 2020.
\newblock URL \url{https://proceedings.neurips.cc/paper_files/paper/2020/file/1457c0d6bfcb4967418bfb8ac142f64a-Paper.pdf}.

\bibitem[Cao et~al.(2023)Cao, Min, Wang, and Hajishirzi]{cao2023btr}
Qingqing Cao, Sewon Min, Yizhong Wang, and Hannaneh Hajishirzi.
\newblock Btr: Binary token representations for efficient retrieval augmented language models.
\newblock \emph{arXiv preprint arXiv:2310.01329}, 2023.

\bibitem[Computer(2023)]{together2023redpajama}
Together Computer.
\newblock Redpajama: An open source recipe to reproduce llama training dataset, 2023.
\newblock URL \url{https://github.com/togethercomputer/RedPajama-Data}.

\bibitem[Gadre et~al.(2024)Gadre, Smyrnis, Shankar, Gururangan, Wortsman, Shao, Mercat, Fang, Li, Keh, et~al.]{gadre2024language}
Samir~Yitzhak Gadre, Georgios Smyrnis, Vaishaal Shankar, Suchin Gururangan, Mitchell Wortsman, Rulin Shao, Jean Mercat, Alex Fang, Jeffrey Li, Sedrick Keh, et~al.
\newblock Language models scale reliably with over-training and on downstream tasks.
\newblock \emph{arXiv preprint arXiv:2403.08540}, 2024.

\bibitem[Gao et~al.(2020)Gao, Biderman, Black, Golding, Hoppe, Foster, Phang, He, Thite, Nabeshima, Presser, and Leahy]{pile}
Leo Gao, Stella Biderman, Sid Black, Laurence Golding, Travis Hoppe, Charles Foster, Jason Phang, Horace He, Anish Thite, Noa Nabeshima, Shawn Presser, and Connor Leahy.
\newblock The {P}ile: An 800gb dataset of diverse text for language modeling.
\newblock \emph{arXiv preprint arXiv:2101.00027}, 2020.

\bibitem[Gao et~al.(2023)Gao, Yen, Yu, and Chen]{gao-etal-2023-enabling}
Tianyu Gao, Howard Yen, Jiatong Yu, and Danqi Chen.
\newblock Enabling large language models to generate text with citations.
\newblock In Houda Bouamor, Juan Pino, and Kalika Bali (eds.), \emph{Proceedings of the 2023 Conference on Empirical Methods in Natural Language Processing}, pp.\  6465--6488, Singapore, December 2023. Association for Computational Linguistics.
\newblock \doi{10.18653/v1/2023.emnlp-main.398}.
\newblock URL \url{https://aclanthology.org/2023.emnlp-main.398}.

\bibitem[Groeneveld et~al.(2024)Groeneveld, Beltagy, Walsh, Bhagia, Kinney, Tafjord, Jha, Ivison, Magnusson, Wang, et~al.]{groeneveld2024olmo}
Dirk Groeneveld, Iz~Beltagy, Pete Walsh, Akshita Bhagia, Rodney Kinney, Oyvind Tafjord, Ananya~Harsh Jha, Hamish Ivison, Ian Magnusson, Yizhong Wang, et~al.
\newblock Olmo: Accelerating the science of language models.
\newblock \emph{arXiv preprint arXiv:2402.00838}, 2024.

\bibitem[Guu et~al.(2020)Guu, Lee, Tung, Pasupat, and Chang]{guu-etal-2020-realm}
Kelvin Guu, Kenton Lee, Zora Tung, Panupong Pasupat, and Ming-Wei Chang.
\newblock Realm: retrieval-augmented language model pre-training.
\newblock In \emph{Proceedings of the 37th International Conference on Machine Learning}, ICML'20. JMLR.org, 2020.

\bibitem[Hendrycks et~al.(2020)Hendrycks, Burns, Basart, Zou, Mazeika, Song, and Steinhardt]{hendrycks2020measuring}
Dan Hendrycks, Collin Burns, Steven Basart, Andy Zou, Mantas Mazeika, Dawn Song, and Jacob Steinhardt.
\newblock Measuring massive multitask language understanding.
\newblock \emph{arXiv preprint arXiv:2009.03300}, 2020.

\bibitem[Hendrycks et~al.(2021)Hendrycks, Burns, Basart, Zou, Mazeika, Song, and Steinhardt]{hendryckstest2021}
Dan Hendrycks, Collin Burns, Steven Basart, Andy Zou, Mantas Mazeika, Dawn Song, and Jacob Steinhardt.
\newblock Measuring massive multitask language understanding.
\newblock \emph{Proceedings of the International Conference on Learning Representations (ICLR)}, 2021.

\bibitem[Hoffmann et~al.(2022)Hoffmann, Borgeaud, Mensch, Buchatskaya, Cai, Rutherford, Casas, Hendricks, Welbl, Clark, et~al.]{hoffmann2022training}
Jordan Hoffmann, Sebastian Borgeaud, Arthur Mensch, Elena Buchatskaya, Trevor Cai, Eliza Rutherford, Diego de~Las Casas, Lisa~Anne Hendricks, Johannes Welbl, Aidan Clark, et~al.
\newblock Training compute-optimal large language models.
\newblock \emph{arXiv preprint arXiv:2203.15556}, 2022.

\bibitem[Izacard \& Grave(2020)Izacard and Grave]{izacard2020leveraging}
Gautier Izacard and Edouard Grave.
\newblock Leveraging passage retrieval with generative models for open domain question answering, 2020.
\newblock URL \url{https://arxiv.org/abs/2007.0128}.

\bibitem[Izacard et~al.(2022)Izacard, Caron, Hosseini, Riedel, Bojanowski, Joulin, and Grave]{izacard2022unsupervised}
Gautier Izacard, Mathilde Caron, Lucas Hosseini, Sebastian Riedel, Piotr Bojanowski, Armand Joulin, and Edouard Grave.
\newblock Unsupervised dense information retrieval with contrastive learning.
\newblock \emph{Transactions on Machine Learning Research}, 2022.
\newblock ISSN 2835-8856.
\newblock URL \url{https://openreview.net/forum?id=jKN1pXi7b0}.

\bibitem[Izacard et~al.(2023)Izacard, Lewis, Lomeli, Hosseini, Petroni, Schick, Dwivedi-Yu, Joulin, Riedel, and Grave]{izacard2022few}
Gautier Izacard, Patrick Lewis, Maria Lomeli, Lucas Hosseini, Fabio Petroni, Timo Schick, Jane Dwivedi-Yu, Armand Joulin, Sebastian Riedel, and Edouard Grave.
\newblock Atlas: Few-shot learning with retrieval augmented language models.
\newblock \emph{Journal of Machine Learning Research}, 2023.
\newblock URL \url{http://jmlr.org/papers/v24/23-0037.html}.

\bibitem[J{\'e}gou et~al.(2011)J{\'e}gou, Tavenard, Douze, and Amsaleg]{jegou2011searching}
Herv{\'e} J{\'e}gou, Romain Tavenard, Matthijs Douze, and Laurent Amsaleg.
\newblock Searching in one billion vectors: re-rank with source coding.
\newblock In \emph{2011 IEEE International Conference on Acoustics, Speech and Signal Processing (ICASSP)}, pp.\  861--864. IEEE, 2011.

\bibitem[Jin et~al.(2020)Jin, Pan, Oufattole, Weng, Fang, and Szolovits]{jin2020disease}
Di~Jin, Eileen Pan, Nassim Oufattole, Wei-Hung Weng, Hanyi Fang, and Peter Szolovits.
\newblock What disease does this patient have? a large-scale open domain question answering dataset from medical exams.
\newblock \emph{arXiv preprint arXiv:2009.13081}, 2020.

\bibitem[Joshi et~al.(2017)Joshi, Choi, Weld, and Zettlemoyer]{joshi2017triviaqa}
Mandar Joshi, Eunsol Choi, Daniel~S Weld, and Luke Zettlemoyer.
\newblock Triviaqa: A large scale distantly supervised challenge dataset for reading comprehension.
\newblock \emph{arXiv preprint arXiv:1705.03551}, 2017.

\bibitem[Kaplan et~al.(2020)Kaplan, McCandlish, Henighan, Brown, Chess, Child, Gray, Radford, Wu, and Amodei]{kaplan2020scaling}
Jared Kaplan, Sam McCandlish, Tom Henighan, Tom~B Brown, Benjamin Chess, Rewon Child, Scott Gray, Alec Radford, Jeffrey Wu, and Dario Amodei.
\newblock Scaling laws for neural language models.
\newblock \emph{arXiv preprint arXiv:2001.08361}, 2020.

\bibitem[Karpukhin et~al.(2020)Karpukhin, Oğuz, Min, Lewis, Wu, Edunov, Chen, and tau Yih]{karpukhin2020dense}
Vladimir Karpukhin, Barlas Oğuz, Sewon Min, Patrick Lewis, Ledell Wu, Sergey Edunov, Danqi Chen, and Wen tau Yih.
\newblock Dense passage retrieval for open-domain question answering, 2020.

\bibitem[Khandelwal et~al.(2020)Khandelwal, Levy, Jurafsky, Zettlemoyer, and Lewis]{khandelwal2020generalization}
Urvashi Khandelwal, Omer Levy, Dan Jurafsky, Luke Zettlemoyer, and Mike Lewis.
\newblock Generalization through memorization: Nearest neighbor language models, 2020.

\bibitem[Kwiatkowski et~al.(2019)Kwiatkowski, Palomaki, Redfield, Collins, Parikh, Alberti, Epstein, Polosukhin, Devlin, Lee, et~al.]{kwiatkowski2019natural}
Tom Kwiatkowski, Jennimaria Palomaki, Olivia Redfield, Michael Collins, Ankur Parikh, Chris Alberti, Danielle Epstein, Illia Polosukhin, Jacob Devlin, Kenton Lee, et~al.
\newblock Natural questions: a benchmark for question answering research.
\newblock \emph{Transactions of the Association for Computational Linguistics}, 7:\penalty0 453--466, 2019.

\bibitem[Lee et~al.(2022)Lee, Ippolito, Nystrom, Zhang, Eck, Callison-Burch, and Carlini]{lee-etal-2022-deduplicating}
Katherine Lee, Daphne Ippolito, Andrew Nystrom, Chiyuan Zhang, Douglas Eck, Chris Callison-Burch, and Nicholas Carlini.
\newblock Deduplicating training data makes language models better.
\newblock In Smaranda Muresan, Preslav Nakov, and Aline Villavicencio (eds.), \emph{Proceedings of the 60th Annual Meeting of the Association for Computational Linguistics (Volume 1: Long Papers)}, pp.\  8424--8445, Dublin, Ireland, May 2022. Association for Computational Linguistics.
\newblock \doi{10.18653/v1/2022.acl-long.577}.
\newblock URL \url{https://aclanthology.org/2022.acl-long.577}.

\bibitem[Lee et~al.(2019)Lee, Chang, and Toutanova]{lee-etal-2019-latent}
Kenton Lee, Ming-Wei Chang, and Kristina Toutanova.
\newblock Latent retrieval for weakly supervised open domain question answering.
\newblock In Anna Korhonen, David Traum, and Llu{\'\i}s M{\`a}rquez (eds.), \emph{Proceedings of the 57th Annual Meeting of the Association for Computational Linguistics}, pp.\  6086--6096, Florence, Italy, July 2019. Association for Computational Linguistics.
\newblock \doi{10.18653/v1/P19-1612}.
\newblock URL \url{https://aclanthology.org/P19-1612}.

\bibitem[Lin et~al.(2023)Lin, Asai, Li, Oguz, Lin, Mehdad, Yih, and Chen]{lin2023train}
Sheng-Chieh Lin, Akari Asai, Minghan Li, Barlas Oguz, Jimmy Lin, Yashar Mehdad, Wen-tau Yih, and Xilun Chen.
\newblock How to train your dragon: Diverse augmentation towards generalizable dense retrieval.
\newblock \emph{arXiv preprint arXiv:2302.07452}, 2023.

\bibitem[Lin et~al.(2024)Lin, Chen, Chen, Shi, Lomeli, James, Rodriguez, Kahn, Szilvasy, Lewis, Zettlemoyer, and Yih]{lin2023radit}
Xi~Victoria Lin, Xilun Chen, Mingda Chen, Weijia Shi, Maria Lomeli, Rich James, Pedro Rodriguez, Jacob Kahn, Gergely Szilvasy, Mike Lewis, Luke Zettlemoyer, and Scott Yih.
\newblock {RA}-{DIT}: Retrieval-augmented dual instruction tuning.
\newblock In \emph{The Twelfth International Conference on Learning Representations}, 2024.
\newblock URL \url{https://openreview.net/forum?id=22OTbutug9}.

\bibitem[Liu et~al.(2023)Liu, Lin, Hewitt, Paranjape, Bevilacqua, Petroni, and Liang]{liu2023lost}
Nelson~F. Liu, Kevin Lin, John Hewitt, Ashwin Paranjape, Michele Bevilacqua, Fabio Petroni, and Percy Liang.
\newblock Lost in the middle: How language models use long contexts, 2023.
\newblock arXiv:2307.03172.

\bibitem[Lo et~al.(2020)Lo, Wang, Neumann, Kinney, and Weld]{lo-wang-2020-s2orc}
Kyle Lo, Lucy~Lu Wang, Mark Neumann, Rodney Kinney, and Daniel Weld.
\newblock {S}2{ORC}: The semantic scholar open research corpus.
\newblock In \emph{Proceedings of the 58th Annual Meeting of the Association for Computational Linguistics}, pp.\  4969--4983, Online, July 2020. Association for Computational Linguistics.
\newblock \doi{10.18653/v1/2020.acl-main.447}.
\newblock URL \url{https://www.aclweb.org/anthology/2020.acl-main.447}.

\bibitem[Magnusson et~al.(2023)Magnusson, Bhagia, Hofmann, Soldaini, Jha, Tafjord, Schwenk, Walsh, Elazar, Lo, et~al.]{magnusson2023paloma}
Ian Magnusson, Akshita Bhagia, Valentin Hofmann, Luca Soldaini, Ananya~Harsh Jha, Oyvind Tafjord, Dustin Schwenk, Evan~Pete Walsh, Yanai Elazar, Kyle Lo, et~al.
\newblock Paloma: A benchmark for evaluating language model fit.
\newblock \emph{arXiv preprint arXiv:2312.10523}, 2023.

\bibitem[Mallen et~al.(2023)Mallen, Asai, Zhong, Das, Khashabi, and Hajishirzi]{mallen-etal-2023-trust}
Alex Mallen, Akari Asai, Victor Zhong, Rajarshi Das, Daniel Khashabi, and Hannaneh Hajishirzi.
\newblock When not to trust language models: Investigating effectiveness of parametric and non-parametric memories.
\newblock In Anna Rogers, Jordan Boyd-Graber, and Naoaki Okazaki (eds.), \emph{Proceedings of the 61st Annual Meeting of the Association for Computational Linguistics (Volume 1: Long Papers)}, pp.\  9802--9822, Toronto, Canada, July 2023. Association for Computational Linguistics.
\newblock \doi{10.18653/v1/2023.acl-long.546}.
\newblock URL \url{https://aclanthology.org/2023.acl-long.546}.

\bibitem[Min et~al.(2023{\natexlab{a}})Min, Gururangan, Wallace, Hajishirzi, Smith, and Zettlemoyer]{min2023silo}
Sewon Min, Suchin Gururangan, Eric Wallace, Hannaneh Hajishirzi, Noah~A. Smith, and Luke Zettlemoyer.
\newblock Silo language models: Isolating legal risk in a nonparametric datastore, 2023{\natexlab{a}}.

\bibitem[Min et~al.(2023{\natexlab{b}})Min, Shi, Lewis, Chen, Yih, Hajishirzi, and Zettlemoyer]{min-etal-2023-nonparametric}
Sewon Min, Weijia Shi, Mike Lewis, Xilun Chen, Wen-tau Yih, Hannaneh Hajishirzi, and Luke Zettlemoyer.
\newblock Nonparametric masked language modeling.
\newblock In Anna Rogers, Jordan Boyd-Graber, and Naoaki Okazaki (eds.), \emph{Findings of the Association for Computational Linguistics: ACL 2023}, pp.\  2097--2118, Toronto, Canada, July 2023{\natexlab{b}}. Association for Computational Linguistics.
\newblock \doi{10.18653/v1/2023.findings-acl.132}.
\newblock URL \url{https://aclanthology.org/2023.findings-acl.132}.

\bibitem[Muennighoff et~al.(2023)Muennighoff, Rush, Barak, Scao, Piktus, Tazi, Pyysalo, Wolf, and Raffel]{muennighoff2023scaling}
Niklas Muennighoff, Alexander~M Rush, Boaz Barak, Teven~Le Scao, Aleksandra Piktus, Nouamane Tazi, Sampo Pyysalo, Thomas Wolf, and Colin Raffel.
\newblock Scaling data-constrained language models.
\newblock \emph{arXiv preprint arXiv:2305.16264}, 2023.

\bibitem[Ni et~al.(2021)Ni, Qu, Lu, Dai, {\'A}brego, Ma, Zhao, Luan, Hall, Chang, et~al.]{ni2021large}
Jianmo Ni, Chen Qu, Jing Lu, Zhuyun Dai, Gustavo~Hern{\'a}ndez {\'A}brego, Ji~Ma, Vincent~Y Zhao, Yi~Luan, Keith~B Hall, Ming-Wei Chang, et~al.
\newblock Large dual encoders are generalizable retrievers.
\newblock \emph{arXiv preprint arXiv:2112.07899}, 2021.

\bibitem[Nogueira \& Cho(2019)Nogueira and Cho]{nogueira2019passage}
Rodrigo Nogueira and Kyunghyun Cho.
\newblock Passage re-ranking with bert.
\newblock \emph{arXiv preprint arXiv:1901.04085}, 2019.

\bibitem[of~Medicine(2023)]{pubmed_baseline_2023}
National~Library of~Medicine.
\newblock Pubmed baseline 2023 repository, 2023.
\newblock URL \url{https://lhncbc.nlm.nih.gov/ii/information/MBR.html}.

\bibitem[Paster et~al.(2023)Paster, Santos, Azerbayev, and Ba]{paster2023openwebmath}
Keiran Paster, Marco~Dos Santos, Zhangir Azerbayev, and Jimmy Ba.
\newblock Openwebmath: An open dataset of high-quality mathematical web text, 2023.

\bibitem[Penedo et~al.(2023)Penedo, Malartic, Hesslow, Cojocaru, Cappelli, Alobeidli, Pannier, Almazrouei, and Launay]{penedo2023refinedweb}
Guilherme Penedo, Quentin Malartic, Daniel Hesslow, Ruxandra Cojocaru, Alessandro Cappelli, Hamza Alobeidli, Baptiste Pannier, Ebtesam Almazrouei, and Julien Launay.
\newblock The refinedweb dataset for falcon llm: outperforming curated corpora with web data, and web data only.
\newblock \emph{arXiv preprint arXiv:2306.01116}, 2023.

\bibitem[Petroni et~al.(2021)Petroni, Piktus, Fan, Lewis, Yazdani, De~Cao, Thorne, Jernite, Karpukhin, Maillard, Plachouras, Rockt{\"a}schel, and Riedel]{petroni-etal-2021-kilt}
Fabio Petroni, Aleksandra Piktus, Angela Fan, Patrick Lewis, Majid Yazdani, Nicola De~Cao, James Thorne, Yacine Jernite, Vladimir Karpukhin, Jean Maillard, Vassilis Plachouras, Tim Rockt{\"a}schel, and Sebastian Riedel.
\newblock {KILT}: a benchmark for knowledge intensive language tasks.
\newblock In Kristina Toutanova, Anna Rumshisky, Luke Zettlemoyer, Dilek Hakkani-Tur, Iz~Beltagy, Steven Bethard, Ryan Cotterell, Tanmoy Chakraborty, and Yichao Zhou (eds.), \emph{Proceedings of the 2021 Conference of the North American Chapter of the Association for Computational Linguistics: Human Language Technologies}, pp.\  2523--2544, Online, June 2021. Association for Computational Linguistics.
\newblock \doi{10.18653/v1/2021.naacl-main.200}.
\newblock URL \url{https://aclanthology.org/2021.naacl-main.200}.

\bibitem[Piktus et~al.(2022)Piktus, Petroni, Karpukhin, Okhonko, Broscheit, Izacard, Lewis, Oğuz, Grave, tau Yih, and Riedel]{piktus2022web}
Aleksandra Piktus, Fabio Petroni, Vladimir Karpukhin, Dmytro Okhonko, Samuel Broscheit, Gautier Izacard, Patrick Lewis, Barlas Oğuz, Edouard Grave, Wen tau Yih, and Sebastian Riedel.
\newblock The web is your oyster - knowledge-intensive nlp against a very large web corpus, 2022.

\bibitem[Rae et~al.(2022)Rae, Borgeaud, Cai, Millican, Hoffmann, Song, Aslanides, Henderson, Ring, Young, Rutherford, Hennigan, Menick, Cassirer, Powell, van~den Driessche, Hendricks, Rauh, Huang, Glaese, Welbl, Dathathri, Huang, Uesato, Mellor, Higgins, Creswell, McAleese, Wu, Elsen, Jayakumar, Buchatskaya, Budden, Sutherland, Simonyan, Paganini, Sifre, Martens, Li, Kuncoro, Nematzadeh, Gribovskaya, Donato, Lazaridou, Mensch, Lespiau, Tsimpoukelli, Grigorev, Fritz, Sottiaux, Pajarskas, Pohlen, Gong, Toyama, de~Masson~d'Autume, Li, Terzi, Mikulik, Babuschkin, Clark, de~Las~Casas, Guy, Jones, Bradbury, Johnson, Hechtman, Weidinger, Gabriel, Isaac, Lockhart, Osindero, Rimell, Dyer, Vinyals, Ayoub, Stanway, Bennett, Hassabis, Kavukcuoglu, and Irving]{rae2022scaling}
Jack~W. Rae, Sebastian Borgeaud, Trevor Cai, Katie Millican, Jordan Hoffmann, Francis Song, John Aslanides, Sarah Henderson, Roman Ring, Susannah Young, Eliza Rutherford, Tom Hennigan, Jacob Menick, Albin Cassirer, Richard Powell, George van~den Driessche, Lisa~Anne Hendricks, Maribeth Rauh, Po-Sen Huang, Amelia Glaese, Johannes Welbl, Sumanth Dathathri, Saffron Huang, Jonathan Uesato, John Mellor, Irina Higgins, Antonia Creswell, Nat McAleese, Amy Wu, Erich Elsen, Siddhant Jayakumar, Elena Buchatskaya, David Budden, Esme Sutherland, Karen Simonyan, Michela Paganini, Laurent Sifre, Lena Martens, Xiang~Lorraine Li, Adhiguna Kuncoro, Aida Nematzadeh, Elena Gribovskaya, Domenic Donato, Angeliki Lazaridou, Arthur Mensch, Jean-Baptiste Lespiau, Maria Tsimpoukelli, Nikolai Grigorev, Doug Fritz, Thibault Sottiaux, Mantas Pajarskas, Toby Pohlen, Zhitao Gong, Daniel Toyama, Cyprien de~Masson~d'Autume, Yujia Li, Tayfun Terzi, Vladimir Mikulik, Igor Babuschkin, Aidan Clark, Diego de~Las~Casas, Aurelia Guy, Chris Jones,
  James Bradbury, Matthew Johnson, Blake Hechtman, Laura Weidinger, Iason Gabriel, William Isaac, Ed~Lockhart, Simon Osindero, Laura Rimell, Chris Dyer, Oriol Vinyals, Kareem Ayoub, Jeff Stanway, Lorrayne Bennett, Demis Hassabis, Koray Kavukcuoglu, and Geoffrey Irving.
\newblock Scaling language models: Methods, analysis \& insights from training gopher, 2022.

\bibitem[Raffel et~al.(2020)Raffel, Shazeer, Roberts, Lee, Narang, Matena, Zhou, Li, and Liu]{raffel2020exploring}
Colin Raffel, Noam Shazeer, Adam Roberts, Katherine Lee, Sharan Narang, Michael Matena, Yanqi Zhou, Wei Li, and Peter~J. Liu.
\newblock Exploring the limits of transfer learning with a unified text-to-text transformer.
\newblock \emph{Journal of Machine Learning Research}, 21\penalty0 (140):\penalty0 1--67, 2020.
\newblock URL \url{http://jmlr.org/papers/v21/20-074.html}.

\bibitem[Ram et~al.(2023)Ram, Levine, Dalmedigos, Muhlgay, Shashua, Leyton-Brown, and Shoham]{ram-etal-2023-context}
Ori Ram, Yoav Levine, Itay Dalmedigos, Dor Muhlgay, Amnon Shashua, Kevin Leyton-Brown, and Yoav Shoham.
\newblock In-context retrieval-augmented language models.
\newblock \emph{Transactions of the Association for Computational Linguistics}, 11:\penalty0 1316--1331, 2023.
\newblock \doi{10.1162/tacl_a_00605}.
\newblock URL \url{https://aclanthology.org/2023.tacl-1.75}.

\bibitem[Sachan et~al.(2022)Sachan, Lewis, Joshi, Aghajanyan, Yih, Pineau, and Zettlemoyer]{sachan-etal-2022-improving}
Devendra Sachan, Mike Lewis, Mandar Joshi, Armen Aghajanyan, Wen-tau Yih, Joelle Pineau, and Luke Zettlemoyer.
\newblock Improving passage retrieval with zero-shot question generation.
\newblock In Yoav Goldberg, Zornitsa Kozareva, and Yue Zhang (eds.), \emph{Proceedings of the 2022 Conference on Empirical Methods in Natural Language Processing}, pp.\  3781--3797, Abu Dhabi, United Arab Emirates, December 2022. Association for Computational Linguistics.
\newblock \doi{10.18653/v1/2022.emnlp-main.249}.
\newblock URL \url{https://aclanthology.org/2022.emnlp-main.249}.

\bibitem[Shao et~al.(2023)Shao, Min, Zettlemoyer, and Koh]{shao2023retrievalbased}
Rulin Shao, Sewon Min, Luke Zettlemoyer, and Pang~Wei Koh.
\newblock Retrieval-based language models using a multi-domain datastore.
\newblock In \emph{NeurIPS 2023 Workshop on Distribution Shifts: New Frontiers with Foundation Models}, 2023.
\newblock URL \url{https://openreview.net/forum?id=5ck1WQ4yW4}.

\bibitem[Shi et~al.(2023)Shi, Min, Yasunaga, Seo, James, Lewis, Zettlemoyer, and tau Yih]{shi2023replug}
Weijia Shi, Sewon Min, Michihiro Yasunaga, Minjoon Seo, Rich James, Mike Lewis, Luke Zettlemoyer, and Wen tau Yih.
\newblock Replug: Retrieval-augmented black-box language models, 2023.

\bibitem[Shi et~al.(2024)Shi, Min, Lomeli, Zhou, Li, Szilvasy, James, Lin, Smith, Zettlemoyer, Yih, and Lewis]{shi2024incontext}
Weijia Shi, Sewon Min, Maria Lomeli, Chunting Zhou, Margaret Li, Gergely Szilvasy, Rich James, Xi~Victoria Lin, Noah~A. Smith, Luke Zettlemoyer, Scott Yih, and Mike Lewis.
\newblock In-context pretraining: Language modeling beyond document boundaries, 2024.

\bibitem[Soldaini \& Lo(2023)Soldaini and Lo]{peS2o}
Luca Soldaini and Kyle Lo.
\newblock {peS2o (Pretraining Efficiently on S2ORC) Dataset}.
\newblock Technical report, {Allen Institute for AI}, 2023.
\newblock ODC-By, \url{https://github.com/allenai/pes2o}.

\bibitem[Soldaini et~al.(2024)Soldaini, Kinney, Bhagia, Schwenk, Atkinson, Authur, Bogin, Chandu, Dumas, Elazar, Hofmann, Jha, Kumar, Lucy, Lyu, Lambert, Magnusson, Morrison, Muennighoff, Naik, Nam, Peters, Ravichander, Richardson, Shen, Strubell, Subramani, Tafjord, Walsh, Zettlemoyer, Smith, Hajishirzi, Beltagy, Groeneveld, Dodge, and Lo]{soldaini2024dolma}
Luca Soldaini, Rodney Kinney, Akshita Bhagia, Dustin Schwenk, David Atkinson, Russell Authur, Ben Bogin, Khyathi Chandu, Jennifer Dumas, Yanai Elazar, Valentin Hofmann, Ananya~Harsh Jha, Sachin Kumar, Li~Lucy, Xinxi Lyu, Nathan Lambert, Ian Magnusson, Jacob Morrison, Niklas Muennighoff, Aakanksha Naik, Crystal Nam, Matthew~E. Peters, Abhilasha Ravichander, Kyle Richardson, Zejiang Shen, Emma Strubell, Nishant Subramani, Oyvind Tafjord, Pete Walsh, Luke Zettlemoyer, Noah~A. Smith, Hannaneh Hajishirzi, Iz~Beltagy, Dirk Groeneveld, Jesse Dodge, and Kyle Lo.
\newblock Dolma: an open corpus of three trillion tokens for language model pretraining research, 2024.

\bibitem[Touvron et~al.(2023)Touvron, Martin, Stone, Albert, Almahairi, Babaei, Bashlykov, Batra, Bhargava, Bhosale, et~al.]{touvron2023llama}
Hugo Touvron, Louis Martin, Kevin Stone, Peter Albert, Amjad Almahairi, Yasmine Babaei, Nikolay Bashlykov, Soumya Batra, Prajjwal Bhargava, Shruti Bhosale, et~al.
\newblock Llama 2: Open foundation and fine-tuned chat models.
\newblock \emph{arXiv preprint arXiv:2307.09288}, 2023.

\bibitem[Wang et~al.(2023)Wang, Ping, Xu, McAfee, Liu, Shoeybi, Dong, Kuchaiev, Li, Xiao, Anandkumar, and Catanzaro]{wang-etal-2023-shall}
Boxin Wang, Wei Ping, Peng Xu, Lawrence McAfee, Zihan Liu, Mohammad Shoeybi, Yi~Dong, Oleksii Kuchaiev, Bo~Li, Chaowei Xiao, Anima Anandkumar, and Bryan Catanzaro.
\newblock Shall we pretrain autoregressive language models with retrieval? a comprehensive study.
\newblock In Houda Bouamor, Juan Pino, and Kalika Bali (eds.), \emph{Proceedings of the 2023 Conference on Empirical Methods in Natural Language Processing}, pp.\  7763--7786, Singapore, December 2023. Association for Computational Linguistics.
\newblock \doi{10.18653/v1/2023.emnlp-main.482}.
\newblock URL \url{https://aclanthology.org/2023.emnlp-main.482}.

\bibitem[Wang et~al.(2024)Wang, Ping, McAfee, Xu, Li, Shoeybi, and Catanzaro]{wang2024instructretro}
Boxin Wang, Wei Ping, Lawrence McAfee, Peng Xu, Bo~Li, Mohammad Shoeybi, and Bryan Catanzaro.
\newblock Instructretro: Instruction tuning post retrieval-augmented pretraining, 2024.
\newblock URL \url{https://openreview.net/forum?id=4stB7DFLp6}.

\bibitem[Welleck et~al.(2021)Welleck, Liu, Bras, Hajishirzi, Choi, and Cho]{welleck2021naturalproofs}
Sean Welleck, Jiacheng Liu, Ronan~Le Bras, Hannaneh Hajishirzi, Yejin Choi, and Kyunghyun Cho.
\newblock Naturalproofs: Mathematical theorem proving in natural language.
\newblock In \emph{Thirty-fifth Conference on Neural Information Processing Systems Datasets and Benchmarks Track (Round 1)}, 2021.
\newblock URL \url{https://openreview.net/forum?id=Jvxa8adr3iY}.

\bibitem[Wenzek et~al.(2020)Wenzek, Lachaux, Conneau, Chaudhary, Guzm{\'a}n, Joulin, and Grave]{wenzek-etal-2020-ccnet}
Guillaume Wenzek, Marie-Anne Lachaux, Alexis Conneau, Vishrav Chaudhary, Francisco Guzm{\'a}n, Armand Joulin, and Edouard Grave.
\newblock {CCN}et: Extracting high quality monolingual datasets from web crawl data.
\newblock In Nicoletta Calzolari, Fr{\'e}d{\'e}ric B{\'e}chet, Philippe Blache, Khalid Choukri, Christopher Cieri, Thierry Declerck, Sara Goggi, Hitoshi Isahara, Bente Maegaard, Joseph Mariani, H{\'e}l{\`e}ne Mazo, Asuncion Moreno, Jan Odijk, and Stelios Piperidis (eds.), \emph{Proceedings of the Twelfth Language Resources and Evaluation Conference}, pp.\  4003--4012, Marseille, France, May 2020. European Language Resources Association.
\newblock ISBN 979-10-95546-34-4.
\newblock URL \url{https://aclanthology.org/2020.lrec-1.494}.

\bibitem[Xiao et~al.(2023)Xiao, Lin, Seznec, Wu, Demouth, and Han]{xiao2023smoothquant}
Guangxuan Xiao, Ji~Lin, Mickael Seznec, Hao Wu, Julien Demouth, and Song Han.
\newblock Smoothquant: Accurate and efficient post-training quantization for large language models.
\newblock In \emph{International Conference on Machine Learning}, pp.\  38087--38099. PMLR, 2023.

\bibitem[Xu et~al.(2023)Xu, Shi, and Choi]{xu2023recomp}
Fangyuan Xu, Weijia Shi, and Eunsol Choi.
\newblock Recomp: Improving retrieval-augmented lms with compression and selective augmentation, 2023.

\end{thebibliography}

%%%%%%%%%%%%%%%%%%%%%%%%%%%%%%%%%%%%%%%%%%%%%%%%%%%%%%%%%%%%
\newpage
\appendix
\section*{Appendix}
\startcontents[sections]
\printcontents[sections]{l}{1}{\setcounter{tocdepth}{2}}
\newpage
\section{Datastore Construction Pipeline}\label{app:datastore}
The \dsname pipeline entails running the following operations in order (Figure~\ref{fig:ds-pipeline}):
\begin{enumerate}[leftmargin=*]
    \item \textbf{Distributed indexing.} We split the documents from each domain into separate shards and construct an index for each shard. Distributing the index allows us to parallelize
    indexing and retrieval processes, and to study different combinations of data sources more easily.
    
    \item \textbf{Distributed document retrieval.} For each test query, we search for the top-$K$ documents over each shard index in parallel. The searched results are first merged within each domain. We cache the per-domain searched results for data composition analysis.

    \item \textbf{Domain merging.} For a target combination of domains in the datastore, we merge the top-$K$ documents from the target domains for that query. This leads to a merged pool of $DK$ retrieved documents, where $D$ is the number of target domains. From this pool, we re-select the top-$K$ documents. 
    
    \item \textbf{Data filtering and reranking.} We then apply the data filtering and reranking steps, as described above, to only the top-$K$ documents (for each test query). This allows us to experiment with different data filters and rerankers without having to rerun indexing or retrieval; moreover, we only need to filter/rerank the retrieved results instead of the whole datastore, and we do not need to repeat these processes for different subsampling ratios.

    \item \textbf{Data subsampling.} We subsample the filtered and reranked documents for each test query. Specifically, for a particular subsampling ratio $p$, we select each document for inclusion i.i.d.~with probability $p$. To try different subsamples (based on different random seeds), we only need to restart the pipeline from this step, and we can reuse computation from all previous steps.

    \item \textbf{Evaluation.} We use the top-$k$ documents for each test query and prepend these documents in front of the query and few-shot examples for evaluation.
        
\end{enumerate}

In practice, we position the reranking step after data subsampling to reduce the number of documents that require reranking. The commutativity of reranking and subsampling is demonstrated in Appendix~\ref{app:subsample}. 
Furthermore, we collect the top-$K'$ documents (where $K' < K$) from the deduplicated and decontaminated top-$K$ set for reranking. We set $K' = k$ when reranking is not applied.
Next, we describe each step of our efficiency-oriented datastore construction pipeline in detail and demonstrate the equivalence between our \dsname pipeline and the naive pipeline. Below is a table of notations for easy reference.

\fbox{
    \resizebox{0.9\linewidth}{!}{%
    \begin{tabular}{ll ll}
        \textbf{Symbol} & \textbf{Description} & \textbf{Symbol} & \textbf{Description} \\
        $\theta$ & Parameters of the reader model & $\phi$ & Parameters of the retriever model \\
        $k$ & Number of documents for evaluation & $K$ & Number of documents before deduplication \\
        $K'$ & Number of documents for reranking & 
        $\mathcal{D}$ & Retrieval datastore 
    \end{tabular}
    }
}

\subsection{Distributed Indexing}
Our pipeline starts with raw data that is split into fixed-size documents. For each document, we run one forward pass over it with a retriever model $\phi$ and save the final-layer representation as an embedding for that document. We store the embeddings of all documents in the datastore for similarity-based search. In practice, we split the documents into separate shards and embed each shard in parallel. As we use the uncompressed embedding for retrieval searches, our indexing does not require additional operations over the saved embeddings. We treat each shard of embeddings as one sharded index. 
This indexing step is executed only once over all data, while the subsequent steps are performed for each query at inference. For simplicity, we describe the subsequent steps for a single query in the sections below.

\subsection{Distributed Document Retrieval}
At inference, we first retrieve the top-$K$ documents from each sharded index in parallel to accelerate the search process. Specifically, we convert the query into an embedding and compute the inner-product similarity scores between this query embedding and every document embedding. We then rank the documents from each shard based on these similarity scores and collect all top-$K$ documents for subsequent merging.

\subsection{Domain Merging}\label{app:merge}
Assuming we have $m$ sharded indices, we merge the $m$ sets of top-$K$ retrieved documents from all indices based on their similarity scores to obtain the final top-$K$ documents. This process is formally defined below as \textit{$m$-sharded distributed element-wise top-$K$ retrieval}. We demonstrate that it is equivalent to directly retrieving the top-$K$ documents from a single index built on all data.

Formally, we introduce two definitions: \textit{element-wise top-$K$ retrieval} and \textit{$m$-shard element-wise top-$K$ retrieval}, the latter of which is what our pipeline uses. 

\begin{definition}[Element-wise top-$K$ retrieval]
    Assume a datastore of $N$ documents: $\mathcal{D} = \{d_1, \cdots, d_N\}$. 
    Given a query $q$, element-wise top-$K$ retrieval uses a retriever $g_\phi$  to compute the similarity score $s_i = g_\phi(q, d_i)$ of the query and each document $d_i$ independently, and then returns the documents with the top-K highest retrieval scores.     
\end{definition}

\begin{definition}[$m$-shard distributed element-wise top-$K$ retrieval]
    Consider a sharding strategy that splits the datastore into $m$ 
    shards $ \{\mathcal{D}_1, \cdots, \mathcal{D}_m\} $, such that each shard contains a disjoint subset of documents $ \mathcal{D} $ (i.e.,  $\mathcal{D}_1 \cup \mathcal{D}_2 \cup \cdots \cup \mathcal{D}_m = \mathcal{D}$; $ \mathcal{D}_i \cap \mathcal{D}_j = \emptyset $, when $ i \neq j $).
    In $m$-shard distributed element-wise retrieval, we fetch the top-$K$ documents from each shard in parallel (if there are fewer than $K$ documents in a shard, then all documents in the shard are returned). 
    Then, we merge these $mK$ documents and return the top-$K$ highest-scoring documents from the merged pool.
\end{definition}

$m$-shard distributed element-wise top-$K$ retrieval is equivalent to retrieving from a single unsharded index ($m=1$). 

\begin{lemma}\label{prop:merge}
    Changing $m$, the number of shards used in distributed element-wise top-$K$ retrieval, does not impact the final retrieved results.
\end{lemma}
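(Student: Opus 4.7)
The plan is to prove the stronger statement that the output of $m$-shard distributed element-wise top-$K$ retrieval is identical to that of unsharded (single-index, $m=1$) element-wise top-$K$ retrieval, for any sharding $\{\mathcal{D}_1,\dots,\mathcal{D}_m\}$. Since both procedures can be compared to the same fixed reference ($m=1$), this immediately implies that any two choices of $m$ produce the same final set.

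The key observation that drives everything is that the retriever is \emph{element-wise}: the score $s_i = g_\phi(q,d_i)$ depends only on the query and a single document, and not on the shard to which the document is assigned or on any other documents. Hence for every document $d\in\mathcal{D}$, its score is the same whether we view it as a member of a shard or of the whole datastore. First I would fix this notation and record this invariance explicitly.

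The main step is then a contrapositive argument. Let $T$ denote the top-$K$ set produced by unsharded retrieval, and for each shard $j$ let $T_j$ denote the top-$K$ of $\mathcal{D}_j$. I claim $T \subseteq \bigcup_j T_j$: take any $d\in T$, and let $j$ be the unique index with $d\in\mathcal{D}_j$. If $d\notin T_j$, then there exist $K$ documents in $\mathcal{D}_j$ with scores strictly greater than (or tie-broken above) the score of $d$; but these $K$ documents also lie in $\mathcal{D}$, contradicting $d\in T$. Hence every globally top-$K$ document survives the per-shard filtering and appears in the merged pool of at most $mK$ candidates. Re-ranking this pool by the same scoring function and keeping the top-$K$ must then return exactly $T$, because any document outside $T$ in the pool has a score no greater than every element of $T\subseteq$ pool. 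The case $|\mathcal{D}_j|<K$ is handled automatically since the shard simply returns all its documents, so nothing is lost.

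The main obstacle, such as it is, is tie-breaking: if several documents share the same score, both procedures must agree on which subset of size $K$ to return. I would dispatch this by assuming (as is standard and as the definition implicitly requires for well-definedness) a fixed deterministic total order on documents used to break ties, for instance lexicographic on document identifiers; the same order is then applied inside each shard and in the merged pool, and the previous argument goes through verbatim with ``strictly greater'' replaced by ``greater in the tie-broken order''. This observation, together with the reduction to $m=1$, completes the proof.
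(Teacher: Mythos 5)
Your proof is correct and follows essentially the same route as the paper's: both establish the two inclusions by observing that any globally top-$K$ document must be top-$K$ within its own shard (so it survives into the merged pool) and that nothing outside the global top-$K$ can outrank it there. You are somewhat more careful than the paper in making the final re-selection step explicit and in pinning down tie-breaking via a fixed total order, which the paper leaves implicit, but these are refinements of the same argument rather than a different approach.
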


\begin{proof}
    Let the top-$K$ documents obtained by the element-wise top-$K$ retrieval and $m$-shard distributed element-wise top-$K$ retrieval be $\mathcal{D}_K$ and $\mathcal{D}_K'$, respectively.
    Since a document that is ranked in the top-$K$ across all documents must be ranked in the top-$K$ within any individual shard, we have $\mathcal{D}_K \subseteq \mathcal{D}_K'$.
    Because $\{ \mathcal{D}^1, \cdots, \mathcal{D}^m\}$ is a disjoint union of $\mathcal{D}$, for any document $d$ in $\mathcal{D}_K'$, there are no more than $K-1$ documents in the $m$ shards, i.e., all documents, that have higher scores than $d$. Then we have $d \in \mathcal{D}_K$. Therefore, $\mathcal{D}_K' \subseteq \mathcal{D}_K$.
    Finally, we have $\mathcal{D}_K = \mathcal{D}_K'$ for any choice of $m \in \mathbb{N}^{+}$.
\end{proof}

In our experiments, we use an indexing method~\footnote{We use \textsc{IndexFlatIP} implemented in FAISS: \url{https://github.com/facebookresearch/faiss/wiki/Faiss-indexes}.} that computes the similarity score between the query and every document, independent of other documents, i.e., element-wise retrieval.
Therefore, our method for distributed search is equivalent to building and retrieving from a single non-distributed index.

\subsection{Data Filtering and Reranking}
Our pipeline includes two basic data processing steps\footnote{Additional data quality filters, such as those from Dolma~\cite{soldaini2024dolma}, are discussed in our analysis section \S\ref{sec:analysis-data}. These filters are applied at the same stage as deduplication.}: data deduplication and decontamination. These steps are applied post-hoc on the retrieved documents.
Reranking, which is optional, is used to enhance the quality of retrieval. Detailed descriptions of deduplication, decontamination, and optional reranking are provided in this section.

\subsubsection{Post Hoc Datastore Deduplication}\label{app:dedup}
Although \textsc{RedPajama}~\citep{together2023redpajama}, the pretraining corpora used to build \dsname, has already been deduplicated, we still noticed many duplicates in the retrieved results, particularly from the web domain. This is because \citet{together2023redpajama} only performs local deduplication within each data shard; globally, many duplicates between shards remain. 

There are two widely adopted approaches for deduplication over a large-scale dataset: the first uses a \textit{Bloom filter}\footnote{A typical implementation is \url{https://github.com/allenai/bff/tree/main}.} and the second uses \textit{MinHash}\footnote{A typical implementation is \url{https://ekzhu.com/datasketch/}.}. In short, the Bloom filter approach is typically used to remove exact duplicates and can also remove near duplicates; the MinHash approach is used to remove near duplicates detected based on n-gram Jaccard similarity. Running global deduplication over 1.4 trillion tokens using a Bloom filter requires 1.5TB RAM memory and 933 CPU hours\footnote{The requirement of Bloom Filter deduplication is estimated using the implementation by AI2: \url{https://github.com/allenai/bff/tree/main} on our hardware setup.}.
The MinHash approach requires even more memory and CPU hours than the Bloom filter approach.
Therefore, deduplicating over the entire raw text of the datastore is computationally  expensive, particularly when it needs to be repeated with every experimental modification.

To get around this, we add a \textit{post-hoc} MinHash deduplication step which is applied to a large pool of top-$K$ retrieved documents. Following \citet{together2023redpajama}, a document pair is a duplicate if its 13-gram Jaccard similarity score is at least 80\%. Note that deduplication on the retrieved results obtained from the entire corpus is an affordable alternative to running global deduplication. The only risk is that we may not have enough documents for reranking (requiring $K'$ documents) or evaluation (requiring $k$ documents) after deduplication. Therefore, we choose a large $K$ ($K\gg K'$ and $K\gg k$), i.e., $K=1000$, to mitigate this risk.

The original implementation of MinHash deduplication skips the chunks with less than 13 grams. Qualitatively, after deduplicating with 13-gram Jaccard similaritily, we still find many short duplicates or nonsensical phrases under 13 words in the deduplicated data pool. Thus, \textbf{we remove all documents with less than 13 words as well at the deduplication step}.

\subsubsection{Post Hoc Datastore Decontamination}\label{app:decon}
Expanding the datastore to the Internet-scale incurs the risk of data contamination. Similar to global deduplication, running pre-decontamination on the entire datastore against every possible evaluation set is inconvenient and computationally expensive. Therefore, we implement a post-hoc decontamination method to remove contamination from the retrieved documents instead of the large datastore. For downstream tasks, we compare the 13-gram Jaccard similarity between each question and the retrieved document, and remove the documents with a high similarity score that is at least 80\%. For language modeling (Figure~\ref{fig:post-decon}), we adopt a stricter approach: we calculate both the 13-gram Jaccard similarity and the longest sub-sentence between the document and the answer, marking the document as contaminated if it either has at least 80\% 13-gram similarity or contains a continuous 13-gram overlap with the answer. We split sentences into grams based on whitespace.

\begin{figure}[ht]
    \centering
    \includegraphics[width=\linewidth]{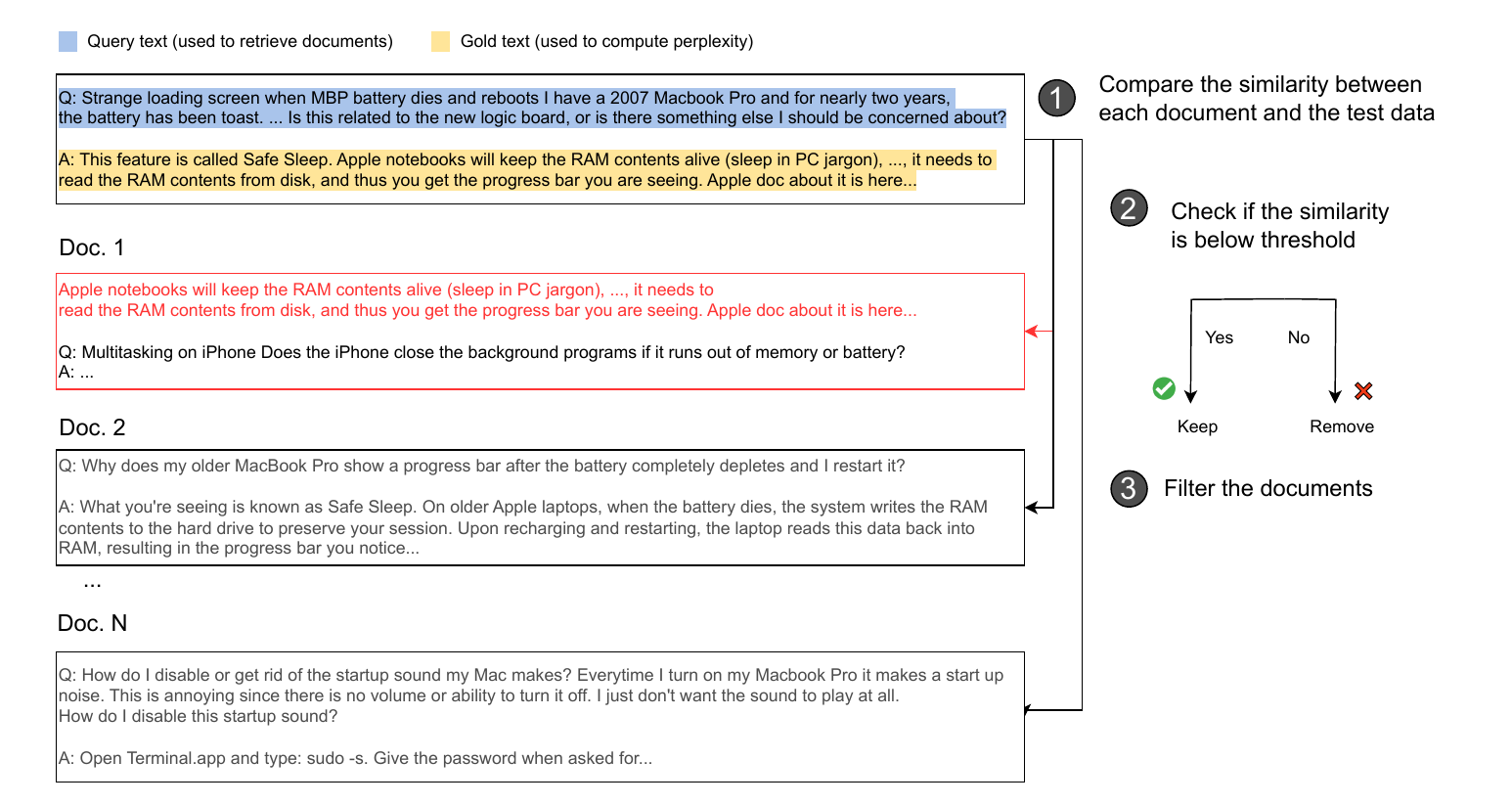}
    \caption{The post-hoc decontamination process for perplexity evaluation.}
    \label{fig:post-decon}
\end{figure}

We show in Lemma~\ref{prop:dedup} that our method for post-deduplication and post-decontamination is equivalent to running deduplication and decontamination on the raw data prior to retrieval. 

\begin{lemma}\label{prop:dedup}
    Running post hoc exact deduplication and decontamination over the top-$K$ retrieved documents before taking the top-$K'$ (where $K' \leq K$) documents is equivalent to retrieving the top-$K'$ documents from a deduplicated and decontaminated datastore.
\end{lemma}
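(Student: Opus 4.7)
The plan is to argue that both decontamination and exact deduplication are filters determined by intrinsic properties of documents, and that the element-wise retrieval established in Lemma~\ref{prop:merge} lets these filters commute with the top-$K$ operation, subject to a mild sufficient condition on $K$.

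Formally, I would let $C \subseteq \mathcal{D}$ be the set of documents removed by decontamination; since decontamination is defined by each document's overlap with the fixed test query or answer, $C$ is a global set independent of the containing subset. Let $\sim$ be the exact-duplicate equivalence on $\mathcal{D}$, and let $R$ be the operation that retains one representative per $\sim$-class, so that the pre-filtered datastore can be written as $F(\mathcal{D}) = R(\mathcal{D} \setminus C)$. Because retrieval is element-wise, each document carries an intrinsic score $s_i = g_\phi(q, d_i)$ that does not depend on the containing subset.

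For decontamination alone, the top-$K'$ documents in $\mathcal{D} \setminus C$ are by definition the $K'$ highest-scoring documents in $\mathcal{D}$ whose index lies outside $C$; whenever $T_K(\mathcal{D})$ contains at least $K'$ such documents, these coincide with the top-$K'$ of $T_K(\mathcal{D}) \setminus C$, so pre-hoc and post-hoc selection agree. For exact deduplication, the members of a $\sim$-class share an embedding and thus the same score $s_i$, so the ranking is insensitive to which representative is chosen within a class; the top-$K'$ representatives from $R(\mathcal{D})$ and from $R(T_K(\mathcal{D}))$ then agree up to representative choice, provided $T_K(\mathcal{D})$ already meets at least $K'$ distinct $\sim$-classes. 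Composing the two filters preserves the argument since neither operation alters contamination status or duplicate status, yielding the claimed equivalence.

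The hard part is not the algebraic commutation, which follows cleanly from element-wise scoring, but the boundary case where $K$ is too small: if a tight cluster of high-scoring duplicates or contaminated documents exhausts $T_K(\mathcal{D})$, legitimate survivors can be pushed below rank $K$ and equivalence fails. A fully precise statement of the lemma therefore carries the sufficient condition that $T_K(\mathcal{D})$ contain at least $K'$ documents surviving both filters, which is the condition the authors enforce in practice by taking $K = 1000 \gg K', k$.
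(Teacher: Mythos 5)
Your proof is correct and takes essentially the same route as the paper's own (much terser) argument: both deduplication and decontamination are deterministic filters whose membership does not depend on the containing subset, and because the retrieval score $s_i = g_\phi(q,d_i)$ is element-wise, the top-$K'$ of the pre-filtered datastore coincides with the top-$K'$ of the post-hoc-filtered top-$K$ pool. The one substantive thing you add is the explicit sufficient condition that the top-$K$ pool retain at least $K'$ survivors of both filters --- a hypothesis the paper's lemma statement leaves implicit and only addresses in the surrounding prose by choosing $K = 1000 \gg K', k$ --- so your version is a more precise statement of the same argument rather than a different one.
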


\begin{proof}
    Given a datastore $\mathcal{D}$, let the deduplicated datastore as $\mathcal{D'}$. Denote the top-$K$ documents from $\mathcal{D}$ as $\mathcal{D}_{K}$ and the top-$K'$ documents $\mathcal{D'}$ as $\mathcal{D'}_{K'}$. 
    Then, denote the top-$K'$ documents retrieved from $\mathcal{D}_K$ as $\mathcal{D}_{K'}$. 
    Since both $\mathcal{D'}{K'}$ and $\mathcal{D}{K'}$ contain the top-$K'$ ranked documents from all those retained after deduplication or decontamination, and since the removal of data is deterministic, we have $\mathcal{D'}{K'} = \mathcal{D}{K'}$.
\end{proof}

For approximate deduplication, there might be edge cases where the two documents are near duplicates but only one of those documents is retrieved in the top-K results. However, the property that none of the final top-k retrieved results are (near-)duplicates of each other will still hold.

\subsubsection{Reranking}
Reranking is optionally applied to enhance retrieval quality in our pipeline. Given the top-$K$ retrieved documents, we apply a stronger retrieval model to embed each document and the query, and recalculate the similarity scores to reorder the documents. Since reranking is an element-wise operation, which we demonstrate to be commutable with subsampling in the next section, we apply reranking after subsampling in practice.

\subsection{Data Subsampling}\label{app:subsample}

To study how model behavior changes with datastore scale, we must subsample datastores of different sizes (i.e., different sampling ratios) and with different random seeds. 
We first describe a naive way of subsampling from a large datastore (Algorithm~\ref{alg:naive}) and show that it is computationally expensive to scale the datastore in this manner. We then introduce our efficient datastore subsampling strategy (Algorithm~\ref{alg:ours}) to make datastore scaling more affordable. We also provide theoretical justification to show that our method is equivalent with high probability to naive datastore subsampling, and additionally provide a computational comparison of the two approaches. 
Finally, we discuss the commutativity of subsampling and other operations, demonstrating that users can flexibly reorder some of these operations in practice.

First, we define a few functions used in our algorithm.
\paragraph{Function \textsc{Subsample}($\mathcal{D}, p, s$).}
    Given a data corpus $\mathcal{D}$ with $N$ elements, we sample each element \textit{i.i.d.}~following a $\Ber(p)$ distribution with probability $p$, and a random seed $s$ set such that for any given $(p, s)$, the same $x$ will always either be sampled or not. The total number of sampled data follows $\Bin(N, p)$.

\paragraph{Function \textsc{Search}($q, \mathcal{D}, k$).} Given a query $q$ and a datastore $\mathcal{D}$, this function returns the $k$ documents from the datastore with the highest similarity to $q$.

\paragraph{Function \textsc{GetTop}($\mathcal{D}$, $k$).}
This function takes in a descending ordered list $\mathcal{D}$ and returns the first $k$ elements.

A naive way of studying datastore scaling is to subsample from the raw data for every combination of subsampling ratio and random seed, as Algorithm~\ref{alg:naive} shows. However, this approach is computationally inefficient as it would entail repeatedly running subsampling over the raw data, building an index, and running retrieval search. 

To affordably study datastore scaling trends, we propose an efficient way to subsample over the datastore. Instead of subsampling from the raw data, for each query, we first retrieve the top-$K$ documents from the entire datastore, and then subsample from this set.
We then take the top-$k$ from the pool of $K$ documents for final evaluation. Algorithm~\ref{alg:ours} shows the pseudocode for our proposed approach.

\begin{algorithm}[t]
    \caption{Naive implementation of datastore scaling}
    \label{alg:naive}
    \begin{algorithmic}[1]
        \State \textbf{Input:} Data corpus $\mathcal{D}_N$, a list of subsampling ratios $P$, a list of random seeds $S$, a list of queries $Q$
        \State \textbf{Output:} Retrieved top-$k$ documents $[\mathcal{D}_k^{(q, p, s)} |\ q \in Q, p \in P, s \in S]$
        \\
        \Function{Main}{$\mathcal{D}_N, P, S$, $Q$}
            \For{$s \in S$}
            \For{$p \in P$}
                
                \State $\mathcal{D}_{p N}^{s} \gets \Call{Subsample}{\mathcal{D}_N, p, s}$ 
                \hfill $\triangleleft$ Complexity: $O(N)$
                
                \State Build an index over $\mathcal{D}_{p N}^{s}$
                \hfill $\triangleleft$ Complexity: $O(pNM)$\footnote{The complexity is dominated by the embedding computation, which is roughly $2N_{\text{tokens}}D$. $N_{\text{tokens}}$ is the number of tokens and $M$ is the number of parameters of the encoder model.}

                \For{$q \in Q$}
                \State $\mathcal{D}_k^{(q, p, s)} \gets \Call{Search}{q, \mathcal{D}_{p |\mathcal{D}|}^s, k}$
                \hfill $\triangleleft$ Complexity: $O(pN)$

                \EndFor
            \EndFor
            \EndFor
            \State \Return $[\mathcal{D}_k^{(q, p, s)} |\ q \in Q, p \in P, s \in S]$
        \EndFunction
    \end{algorithmic}
\end{algorithm}

\begin{algorithm}[t]
    \caption{Our efficient datastore scaling implementation}
    \label{alg:ours}
    \begin{algorithmic}[1]
        \State \textbf{Input:} Data corpus $\mathcal{D}_N$, a list of subsampling ratios $P$, a list of random seeds, a list of queries $Q$, number of intermediate retrieved documents $K$ $(K \ll N)$
        \State \textbf{Output:} Retrieved top-$k$ documents $[\mathcal{D}_k^{(q, p, s)} |\ q \in Q, p \in P, s \in S]$
        \\

        \Function{Main}{$\mathcal{D}_N, P, S$, $Q$, $K$}
            \State Build an index over $\mathcal{D}_N$ \hfill $\triangleleft$ Complexity: $O(NM)$\footnote{The complexity is dominated by the embedding computation, which is roughly $2N_{\text{tokens}}D$. $N_{\text{tokens}}$ is the number of tokens and $M$ is the number of parameters of the encoder model.}
            
            \State $\{\mathcal{D}_K^q |\ {q\in Q}\} \gets \Call{Search}{Q, \mathcal{D}_{p_i N}^{s}, K}$ \hfill $\triangleleft$ Complexity: $O(N|Q|)$
            
            \For{$s \in S$}
            \For{$p \in P$}
            \For{$q \in Q$}
                \State $\mathcal{D}_{p K}^{(q, p, s)} \gets \Call{Subsample}{\mathcal{D}_K^q, p, s}$
                \hfill $\triangleleft$ Complexity: $O(K)$
                
                \State $\mathcal{D}_k^{(q, p, s)} \gets \Call{GetTop}{\mathcal{D}_{p_i K}^{(q, p, s)}, k}$
                \hfill $\triangleleft$ Complexity: $O(1)$
            \EndFor
            \EndFor
            \EndFor
            \State \Return $[\mathcal{D}_k^{(q, p, s)} |\ q \in Q, p \in P, s \in S]$
        \EndFunction
    \end{algorithmic}
\end{algorithm}

\paragraph{Computation comparison.} The complexity of naive datastore subsampling is $O((1+M+|Q|)|P||S|N)$, where $M$ is the number of parameters of the retriever model, $Q$ is the number of queries, $|P|$ is the number of subsampling ratios, $|S|$ is the number of random seeds, and $N$ is the number of documents in the datastore.
The complexity of naive datastore subsampling is dominated by $O(M|P||S|N)$ in practice, which is the cost of repetitively building the datastore for all combinations of subsampling ratios and random seeds.
The complexity of our subsampling strategy is only $O(N(M+|Q|)+K|P||Q||S|)$, which is dominated by $O(MN)$, representing the cost of one-time indexing.

\begin{lemma}\label{prop:subsample}
    Subsampling from the retrieved top-$K$ documents with probability $p$ and then taking the top-$k$ ($k\ll K$) from the subsampled documents 
    (Algorithm~\ref{alg:ours}) is equivalent with high probability to directly retrieving top-$k$ documents from a datastore that is subsampled
    from the entire raw text datastore with probability $p$ (Algorithm~\ref{alg:naive}).
    The equivalence holds as long as there are enough $k$ documents left after subsampling.
    The chance of failure, i.e., not having enough documents left, is exponentially small in $K$.
\end{lemma}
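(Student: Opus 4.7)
The plan is to reduce the lemma to a single concentration inequality by pinning down exactly when the two pipelines diverge. Fix a query $q$ and write $\sigma_{p,s}(d)\in\{0,1\}$ for the indicator that document $d$ survives the Bernoulli$(p)$ subsampling with seed $s$; by the specification of \textsc{Subsample}, $\sigma_{p,s}$ is deterministic once $(p,s)$ is fixed, and as $s$ ranges over its distribution, $\{\sigma_{p,s}(d)\}_{d\in\mathcal{D}_N}$ are i.i.d.\ Bernoulli$(p)$. Let $\mathcal{D}_K^q$ denote the top-$K$ documents of $\mathcal{D}_N$ under the retriever score with $q$, which is deterministic and independent of $s$.

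The naive pipeline (Algorithm~\ref{alg:naive}) returns the $k$ highest-scoring elements of $\{d\in\mathcal{D}_N:\sigma_{p,s}(d)=1\}$, and the efficient pipeline (Algorithm~\ref{alg:ours}) returns the $k$ highest-scoring elements of $\{d\in\mathcal{D}_K^q:\sigma_{p,s}(d)=1\}$. The first step is a deterministic structural observation: if the top-$K$ set contains at least $k$ surviving documents, then those $k$ highest-scoring survivors in $\mathcal{D}_K^q$ must also be the $k$ highest-scoring survivors in all of $\mathcal{D}_N$, because any document outside $\mathcal{D}_K^q$ has a strictly lower retriever score than every element of $\mathcal{D}_K^q$ (modulo a fixed deterministic tie-breaking rule on ties, which I would invoke explicitly). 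Hence, letting
\[
X \;=\; \sum_{d\in\mathcal{D}_K^q}\sigma_{p,s}(d),
\]
the two pipelines return identical outputs on query $q$ whenever $X\geq k$.

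The second step is the probabilistic part: bounding $P(X<k)$. Since $\mathcal{D}_K^q$ is deterministic and the $\sigma_{p,s}(d)$ are i.i.d.\ Bernoulli$(p)$, $X\sim\mathrm{Binomial}(K,p)$ with mean $pK$. Assuming the natural regime $k\ll pK$ (which is what makes "enough documents left" meaningful), a standard multiplicative Chernoff bound gives
\[
P(X<k)\;\leq\;\exp\!\Bigl(-\tfrac{1}{2}\bigl(1-\tfrac{k}{pK}\bigr)^{2}pK\Bigr),
\]
which is $\exp(-\Omega(K))$ for any fixed ratio $k/(pK)$ bounded away from $1$. A union bound over the $|Q|$ queries (or simply applying the statement per-query, as the lemma does) then gives the "with high probability" equivalence claimed.

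The only subtle point, rather than a genuine obstacle, is pinning down the tie-breaking convention so that "top-$k$" is well-defined and matches across both algorithms; once this is fixed, the structural observation in step one is purely deterministic and the concentration bound in step two is standard. An honest statement of the lemma should also make the condition $k<pK$ explicit, since the exponential decay rate degrades as $k$ approaches $pK$.
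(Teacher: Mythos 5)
Your proof is correct and follows essentially the same route as the paper's: couple the two pipelines through the deterministic seed, observe that the outputs coincide whenever at least $k$ of the top-$K$ documents survive subsampling, and bound the failure event via the $\mathrm{Binomial}(K,p)$ tail. Your version is somewhat more careful than the paper's --- you make the Chernoff bound explicit, flag the tie-breaking convention, and note that the implicit condition $k \ll pK$ is needed for the exponential rate --- but these are refinements of the same argument rather than a different approach.
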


\begin{proof}
    Assume a fixed random seed set in a way that whether each document in the raw data pool will be included or not is determined. If a document is subsampled in the naive approach, it will also be sampled in our approach, because this document has a retrieval score falls in top-$k$ and it is determined to be sampled. Therefore, as long as there are at least $k$ documents left after our subsampling over the top-$K$ documents, the results are guaranteed to be the same. Since the number of documents remaining after subsampling follows a binomial distribution with parameters $K$ and $p$, we can use standard tail bounds to show that the failure of not having enough documents is exponentially low in $K$.
\end{proof}

Lemma~\ref{prop:subsample} holds with an exponentially small probability for failure.
The probability for failure is very small in practice, i.e., less than 1\% if we have 1000 documents for subsampling. 
To guarantee equivalence, we can use a fallback mechanism where if we do not have enough documents left are subsampling, we try again with a larger value of $K$.
However, in practice, we do not use this fallback as the failure rate is very low in our experiments.

\paragraph{Commutability of Operations.} We next discuss the commutability between subsampling and other operations. To begin, we distinguish the data operations into two sets: \textit{element-level operations} and \textit{set-level operations}.

\begin{definition}[Element-level operation]
    An element-level operation is conditioned on a single element, i.e., a document in our context. For example, assigning an element-wise score to each document during reranking is an element-level operation.
\end{definition}

\begin{definition}[Set-level operation]
    A set-level operation refers is conditioned on a set of elements containing at least two elements. For example, deduplication is a set-level operation.
\end{definition}

\begin{lemma}\label{prop:commutitivity}
    Independent element-level operations are commutable with each other. Set-level operations are not commutable.
\end{lemma}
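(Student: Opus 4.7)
The plan is to handle the two claims separately after fixing a common formalization. I would represent an element-level operation $f$ as one whose action on any set $S$ is given by a per-element map $h_f$, so that $f(S) = \{h_f(x) : x \in S\}$, with $h_f(x) = \bot$ denoting deletion in the filter case; a set-level operation is by contrast one whose decision on an element $x$ is allowed to depend on other elements of $S$. With "independent" understood to mean that $h_f$ is defined without reference to the other operation's output (and, in particular, without reference to $S \setminus \{x\}$), this sets up the positive claim cleanly.

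For the positive claim, I would focus on the case actually used in the pipeline, where element-level operations are either Boolean filters given by predicates $p_f, p_g$ (length, language, and alphanumeric filters, as well as the per-element scoring step used in reranking and the seeded i.i.d. inclusion in subsampling) or element-wise scoring maps. For filters, both $f(g(S))$ and $g(f(S))$ equal $\{x \in S : p_f(x) \land p_g(x)\}$, so commutativity follows immediately from the commutativity of conjunction; for transformations the same unfolding reduces commutativity to pointwise commutativity of $h_f$ and $h_g$, which independence is designed to encode. For the negative claim, a single counterexample suffices. I would take deduplication $D$ and top-$k$ selection $T_k$ by score, both clearly set-level, and exhibit a small input on which $D \circ T_k \neq T_k \circ D$. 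For instance, with $S = [(a,5), (a,5), (b,4), (c,3)]$ and $k = 2$, one computes $T_2(D(S)) = [(a,5), (b,4)]$ whereas $D(T_2(S)) = [(a,5)]$, and the two orders disagree.

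The hard part will be pinning down the right formal notion of "independent" so that the positive claim is not vacuous but still covers the intended applications. Two purely local transformations need not commute in general ($x \mapsto x+1$ and $x \mapsto 2x$ is the standard caution), so the reading I would adopt---consistent with the paper's use of this lemma in Appendix~\ref{app:subsample} to justify interchanging reranking and subsampling---is that each operation's per-element map is defined without reference to the other's output. Once this is fixed, the filter case becomes a one-line consequence of the commutativity of conjunction, and the reranking/subsampling case follows because both per-element maps depend only on the document identity (and, for subsampling, a fixed random seed), which is exactly what is needed to justify the pipeline reordering in practice.
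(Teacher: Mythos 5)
Your proposal is correct, and it is considerably more careful than the paper's own proof, which consists of a single sentence asserting that ``the results of independent element-level operations are not impacted by their order of execution'' and that set-level operations are ``order-sensitive.'' You do three things the paper does not. First, you give an explicit formalization ($f(S) = \{h_f(x) : x \in S\}$ with $\bot$ for deletion), which the paper leaves implicit. Second, and most importantly, you flag that the claim as literally stated is false for general element-level transformations ($x \mapsto x+1$ versus $x \mapsto 2x$), and you repair it by restricting to the cases the pipeline actually uses---Boolean filters, where commutativity reduces to commutativity of conjunction, and per-element scoring/subsampling maps that depend only on document identity and a fixed seed. This matches how the paper applies the lemma (it explicitly treats merging and subsampling as masking operations), but the paper never acknowledges that the restriction is needed. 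Third, you supply a concrete counterexample for the negative claim (deduplication versus top-$k$ selection on $[(a,5),(a,5),(b,4),(c,3)]$ with $k=2$), where the paper merely asserts non-commutativity. The trade-off is length: the paper's proof is adequate for its informal purpose, while yours is the version one would want if the lemma were load-bearing in a formal sense; in particular your caveat about what ``independent'' must mean is a genuine correction rather than mere elaboration.
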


\begin{proof}
    Since the results of independent element-level operations are not impacted by their order of execution, they are commutative with each other. However, this does not hold true for set-level operations, which are order-sensitive.
\end{proof}

We note that both merging and subsampling can be considered independent element-level operations if we regard the removed documents as being classified by either process to be masked out.
As a results, operations such as data quality filters, data decontamination, reranking can be moved around before or after post-hoc merging, which made it possible for us to efficiently evaluate the effect of their variants by moving them to after retrieval and merging.

\begin{proposition}
    Our \dsname pipeline is equivalent to the naive pipeline, as shown in Figure~\ref{fig:ds-pipeline}, with high probability.
\end{proposition}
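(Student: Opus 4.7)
The plan is to establish the proposition by a sequence of local pipeline transformations, each justified by one of the earlier lemmas. Starting from the naive pipeline (subsample the raw corpus for a given ratio $p$ and seed $s$, apply deduplication and decontamination, build an index on the resulting shard, retrieve top-$k$ per query, then rerank and evaluate), I would incrementally rewrite it into the \dsname pipeline (index the full corpus once, run distributed top-$K$ retrieval per query, merge across shards, apply deduplication and decontamination post hoc, subsample from the top-$K$, rerank, and take top-$k$). At each rewrite, I would argue that the output distribution over retrieved documents per query, conditional on the random seed $s$, is preserved — deterministically where possible, and with high probability for the one step that is probabilistic.

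The key steps, in order, would be: (i) use Lemma~\ref{prop:merge} to replace the monolithic index with a sharded index plus distributed retrieval and domain merging, since element-wise top-$K$ retrieval is invariant to the number of shards; (ii) use Lemma~\ref{prop:dedup} to move post-hoc deduplication and decontamination from the raw datastore to the top-$K$ retrieved set, noting that taking top-$k$ of the deduplicated/decontaminated top-$K$ gives the same result as top-$k$ of the deduplicated/decontaminated full corpus whenever $K$ is large enough to contain all the post-filter top-$k$ candidates; (iii) apply Lemma~\ref{prop:subsample} to move the subsampling step from the raw corpus to the top-$K$ retrieved documents, which introduces the only probabilistic gap — failure occurs only when fewer than $k$ documents survive subsampling of the top-$K$, an event whose probability decays exponentially in $K$ by a standard binomial tail bound with parameters $(K,p)$; and (iv) invoke Lemma~\ref{prop:commutitivity} to justify swapping the reranking step past subsampling (and past any additional element-level filters), since reranking scores each document independently and is thus element-level. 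Chaining these four rewrites gives a path from the naive pipeline to the \dsname pipeline in which every intermediate pipeline has the same per-query output, with the cumulative failure probability bounded by the single probabilistic step.

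The main obstacle, as I see it, is handling the interaction between set-level operations (deduplication in particular) and subsampling, since Lemma~\ref{prop:commutitivity} warns that set-level operations do not generally commute. The technical content is that deduplication and decontamination, when applied post hoc to the top-$K$ set, coincide with their pre-retrieval counterparts only because (a) the documents removed are determined by pairwise or query-wise predicates that do not depend on which other documents were subsampled, and (b) $K$ is chosen large enough that the post-filter top-$k$ is contained in the retrieved top-$K$ with overwhelming probability. I would make this precise by coupling the random seed across both pipelines so that each raw-corpus document has an identical sampling indicator in both orderings, and then arguing that the top-$k$ output differs only on the event that subsampling leaves fewer than $k$ survivors inside the top-$K$ — exactly the bad event controlled in Lemma~\ref{prop:subsample}. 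A union bound over the four rewrites, with three contributing zero and one contributing the exponentially small tail, yields the ``with high probability'' conclusion and completes the proof.
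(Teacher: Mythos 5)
Your proposal is correct and follows essentially the same route as the paper's proof, which likewise cites Lemma~\ref{prop:merge} for the sharded-index equivalence and Lemmas~\ref{prop:dedup}--\ref{prop:commutitivity} for the commutativity of the reordered operations, with the sole probabilistic failure coming from subsampling and bounded exponentially in $K$. Your version is simply a more explicit elaboration (the stepwise rewrites, the seed coupling, and the union bound) of the same argument.
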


\begin{proof}
    Lemma 1 shows that the distributed indexing and retrieval is equivalent to unsharded indexing and retrieval. Lemmas 2-4 show that the operations that we reordered between the naive pipeline and the \dsname pipeline commute without changing the returned results, with a failure probability exponential in K, where the randomness is due to subsampling. Thus, the pipelines are equivalent with high probability.
\end{proof}

\subsection{Evaluation}
After the aforementioned operations, we select the top-$k$ documents from those retained for evaluation. We refer to the next section for a detailed evaluation setup.
\section{Implementation Details}\label{app:impl}

\subsection{Pipeline}\label{app:pipeline}

\paragraph{Sharding.} We split the raw data of each domain into $m$ shards, with $m$ determined based on domain size. Specifically, $m=32$ for each time slice of CommonCrawl and C4, and $m=8$ for the other domains.

\paragraph{Chunking.} Prior to datastore construction, we chunk the raw data in each shard into fixed-length passages of at most 256 words each. 

\paragraph{Deduplication.} Deduplication is a process that removes documents with high overlap within a data pool. Following ~\citet{pmlr-v162-borgeaud22a,magnusson2023paloma}, we deduplicate the retrieved top-$K$ documents based on 13-gram Jaccard similarity between document pairs. Document pairs that share at least 80\% similarity are marked as duplicates, and the document with the lower retrieval score is removed.

\paragraph{Decontamination.} Decontamination is a process that removes documents that share high similarity with evaluation data. For upstream language modeling, we apply a combination of two decontamination methods: \textit{13-gram Jaccard decontamination} and \textit{32-gram longest decontamination}. Specifically, 13-gram Jaccard decontamination computes the 13-gram Jaccard similarity between the document and the answer, and the document is removed if it shares at least 80\% similarity score with the answer. 32-gram longest decontamination removes documents that overlap with the answer by a contiguous sequence of at least 32 tokens. Note that we use an 512-token answer to compute perplexity, so the decontamination ratio is $0.0625$ of the answer length. For downstream tasks, we apply 13-gram Jaccard decontamination and remove retrieved documents with at least 80\% similarity to the test data.

\paragraph{Hyper-parameters.} Our pipeline has three main hyper-parameters: $k$, $K$, and $p$. $k$ is the number of documents used for evaluation. $K$ is the number of documents retrieved before subsampling. $p$ is the subsampling ratio which controls the size of the datastore. We consider $k=3$, $K=1000$, and $p=[0.01, 0.05, 0.1, 0.25, 0.5, 0.75, 1]$. We also set different random seeds for the subsampling process. We run each subsampling with three seeds ($100, 101, 102$) to obtain the confidence intervals in our scaling analyses. We provide a lookup table of the tail bound for our subsampling algorithm's failure to provide enough documents for evaluation in Table~\ref{tab:tail_lookup}. This indicates that our setup is approximately equivalent to performing expensive subsampling on the raw data first based on Proposition~\ref{prop:subsample}.

\begin{table}[htbp]
    \centering
    \caption{Lookup table of the tail bound for a binomial distribution $Binomial(K, p)$ with at least $m =3$ number of successes. 
    }
    \label{tab:tail_lookup}
    \begin{tabular}{lcccccc}
    \toprule
         & $p = 0.01$ & $p = 0.05$ & $p = 0.1$ & $p = 0.25$ & $p = 0.5$ & $p = 0.75$\\
    \midrule
        $K = 1000$ & 0.9973 & 1.000 & 1.000 & 1.000 & 1.000 & 1.000 \\
    \bottomrule
    \end{tabular}
\end{table}

\subsection{Language Modeling Evaluation Setup}
Following \citet{baevski2018adaptive, khandelwal2020generalization, min2023silo}, we split evaluation data into into fixed-length chunks of 1,024 tokens, with a stride of 512 tokens. For each chunk, the first half is used as a prefix and retrieval query, and the second half as the target sequence for calculating perplexity. 

\subsection{Downstream Evaluation Setup}

Table \ref{tab:downstream_tasks} shows the domain, metric, and sample count for each downstream task. We evaluate all downstream tasks in a 5-shot setting and prepend the top-$3$ documents for retrieval-based LM evaluation. We adapt the \texttt{lm-evaluation-harness},\footnote{\url{https://github.com/EleutherAI/lm-evaluation-harness}} a widely used LM evaluation suite, for downstream evaluation.

\begin{table}[htbp]
\caption{Downstream evaluation tasks. 
}
\label{tab:downstream_tasks}
\small
\centering
\begin{tabular}{lcccr}
\toprule 
    \textbf{Task} & 
    \textbf{Type}
     & \textbf{Domain} & \textbf{Metric} & \textbf{Sample Count}\\
\midrule
     \textsc{TQA}~\citep{joshi2017triviaqa} & Open-domain QA & Wikipedia & Exact Match & 17,944 \\
     \textsc{NQ}~\citep{kwiatkowski2019natural} & Open-domain QA & Wikipedia & Exact Match & 3,610 \\
     \textsc{MedQA}~\citep{jin2020disease} & \multirow{1}{*}{Science QA} & Medical & Accuracy & 1,273 \\
     \textsc{MMLU}~\citep{hendrycks2020measuring} & \multirow{1}{*}{Knowledge Reasoning} & Varied & Accuracy & 14,042 \\ 
\bottomrule
\end{tabular}
\end{table}

\paragraph{Prompt format.} We tested two variants of prompt format. The first format starts with the few-shot examples and is followed by retrieved documents. The second format starts with the retrieved documents followed by the few-shot examples. We found the LM can learn the few-shot pattern better when the few-shot exmaples are closer to the question, leading to a superior performance than the other way. Therefore, we stick to the second format through the paper.

\subsection{Compute-Optimal Scaling Setup}\label{app:compute_scaling_setup}
\paragraph{Intermediate checkpoints.}
Thanks to prior works that release intermediate checkpoints, we can study the computational scaling behaviors approximately without pretraining LMs from scratch. In particular, we consider the intermediate checkpoints provided by Pythia~\citep{biderman2023pythia} and OLMo~\citep{groeneveld2024olmo}. Pythia provides checkpoints of 9 sizes trained on up to 300B tokens from the Pile~\citep{pile}. We consider Pythia models that have at least 1B parameters, i.e., \textsc{Pythia-1B}, \textsc{Pythia-2.8B}, \textsc{Pythia-6.9B}, and \textsc{Pythia-12B}, in favor of their capability of handling complex downstream tasks.
Additionally, we consider \textsc{OLMo-1.7-1B} and \textsc{OLMo-1.7-7B} which are trained on 3T and 2T tokens from Dolma~\citep{soldaini2024dolma}, respectively.
For Pythia, we use checkpoints trained on 1/30, 1/15, 1/10, 1/5, 1/4, 1/3, 1/2, and all of the full corpus. For OLMo, which only has models of two sizes, we select additional checkpoints trained on 1/50, 1/40, 1/20, 1/9, 1/8, 1/7, and 1/6 of the full corpus.

\paragraph{FLOPs calculation.} Following \citep{kaplan2020scaling,hoffmann2022training,gadre2024language}, we approximate the FLOPs needed for one forward pass as $\text{FLOPs}_{\text{fwd}} \approx 2 ND$ and for a backward as $\text{FLOPs}_{\text{bwd}} \approx 4 ND$,
where $N$ is the number of parameters and $D$ is the number of tokens. 
Pretraining requires one forward pass and one backward pass on every token in the pretraining corpus. Denote the size of LM as $N_{\text{LM}}$ and the size of the pretraining corpus as $D_{\text{pretrain}}$. The FLOPs for pretraining can be approximated as $\text{FLOPs}_{\text{pretrain}} \approx 6 N_{\text{LM}} D_{\text{pretrain}}$.
Datastore construction requires one forward pass on every token in the datastore corpus during the embedding step. Denote the size of the retriever as $N_{\text{retriever}}$ and the size of the datastore corpus as $D_{\text{datastore}}$. The FLOPs for datastore construction can be approximated as $\text{FLOPs}{\text{datastore}} \approx 2 N{\text{retriever}} D_{\text{datastore}}$.
Because we use a flat index, i.e., no additional operations are required at the indexing step, the FLOPs for datastore construction equal the FLOPs for embedding. We note that other types of indexing, e.g., inverted file index (IVFADC)~\citep{jegou2011searching}, may require additional FLOPs during construction and fewer FLOPs at inference.

\section{Complete Datastore Scaling Results}~\label{app:more_lm}
In this section, we supplement the datastore scaling performance of \pythia\ and \olmo\ models, in addition to the \llama models shown in Figure~\ref{fig:scaling}. Specifically, we present the datastore scaling performance for the following model families:

\begin{itemize}
    \item \textbf{Pythia}~\citep{biderman2023pythia} of 4 sizes: \pythia-1B, \pythia-2.8B, \pythia-6.9B, and \pythia-12B.
    \item \textbf{OLMo}~\citep{groeneveld2024olmo} of 2 sizes: \olmo-1B and \olmo-7B.
    \item \textbf{Llama}~\citep{touvron2023llama}, which we consider \llama-2 7B, \llama-2 13B, and \llama-3 8B.
\end{itemize}

The complete datastore scaling results for TriviaQA, Natural Questions, MMLU, and MedQA are shown in Figures~\ref{fig:tqa_all_models}, \ref{fig:nq_all_models}, \ref{fig:mmlu_all_models}, and \ref{fig:medqa_all_models}, respectively.
We find retrieval benefits LMs of varied sizes across different LM families. In particular, the results show that small LMs outperform larger LMs of the same model architectures when augmented with \dsname. Surprisingly,  Pythia-1B matches Pythia-12B when augmented with only 100B tokens on knowledge-intensive tasks such as TriviaQA and Natural Questions, and it outperfoms Pythia-12B when further increasing the size of datastore.

\begin{figure}[H]
    \centering
    \includegraphics[width=\linewidth]{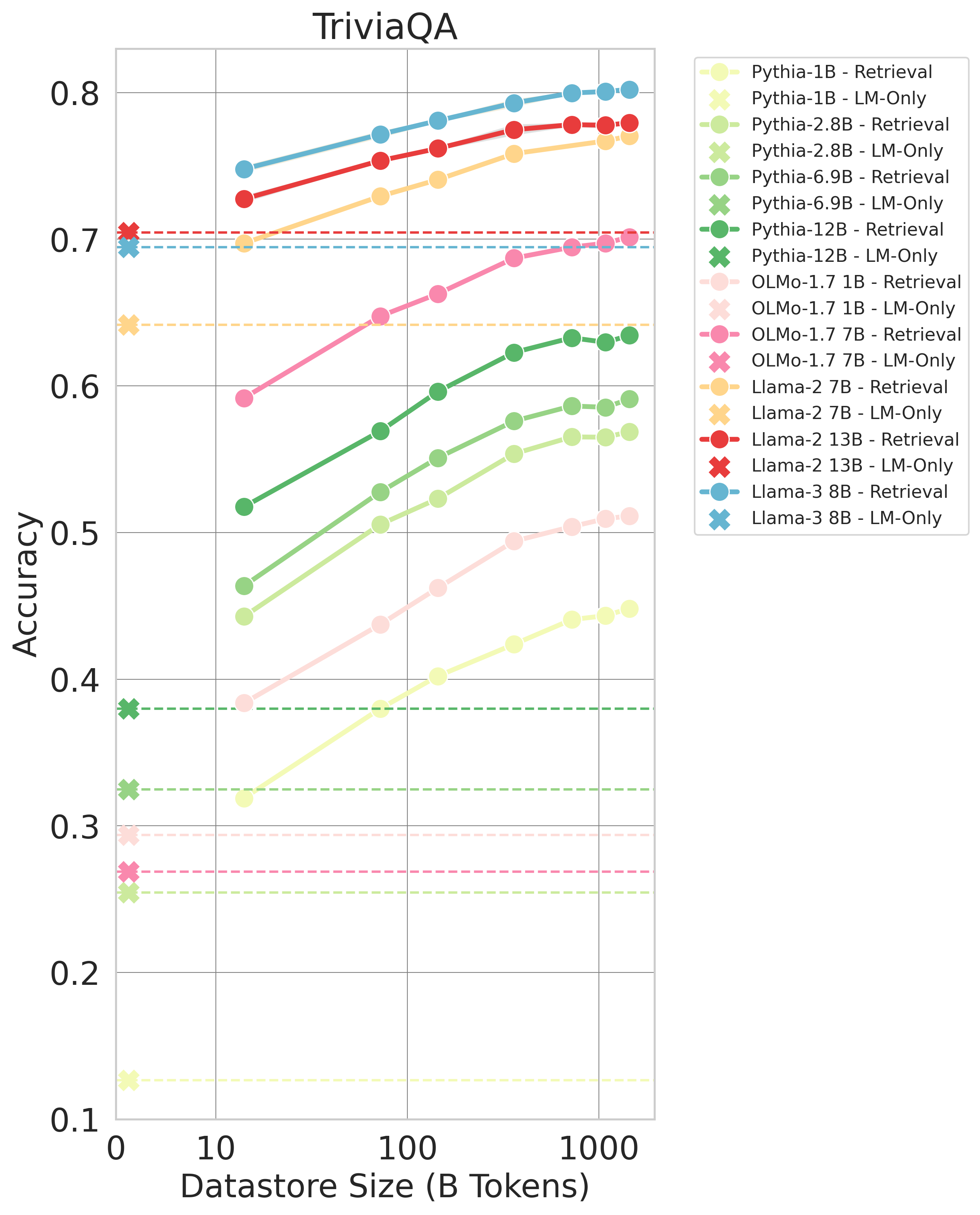}
    \caption{Complete datastore scaling performance on TriviaQA with \pythia\, \olmo\, and \llama models.}
    \label{fig:tqa_all_models}
\end{figure}

\begin{figure}[H]
    \centering
    \includegraphics[width=\linewidth]{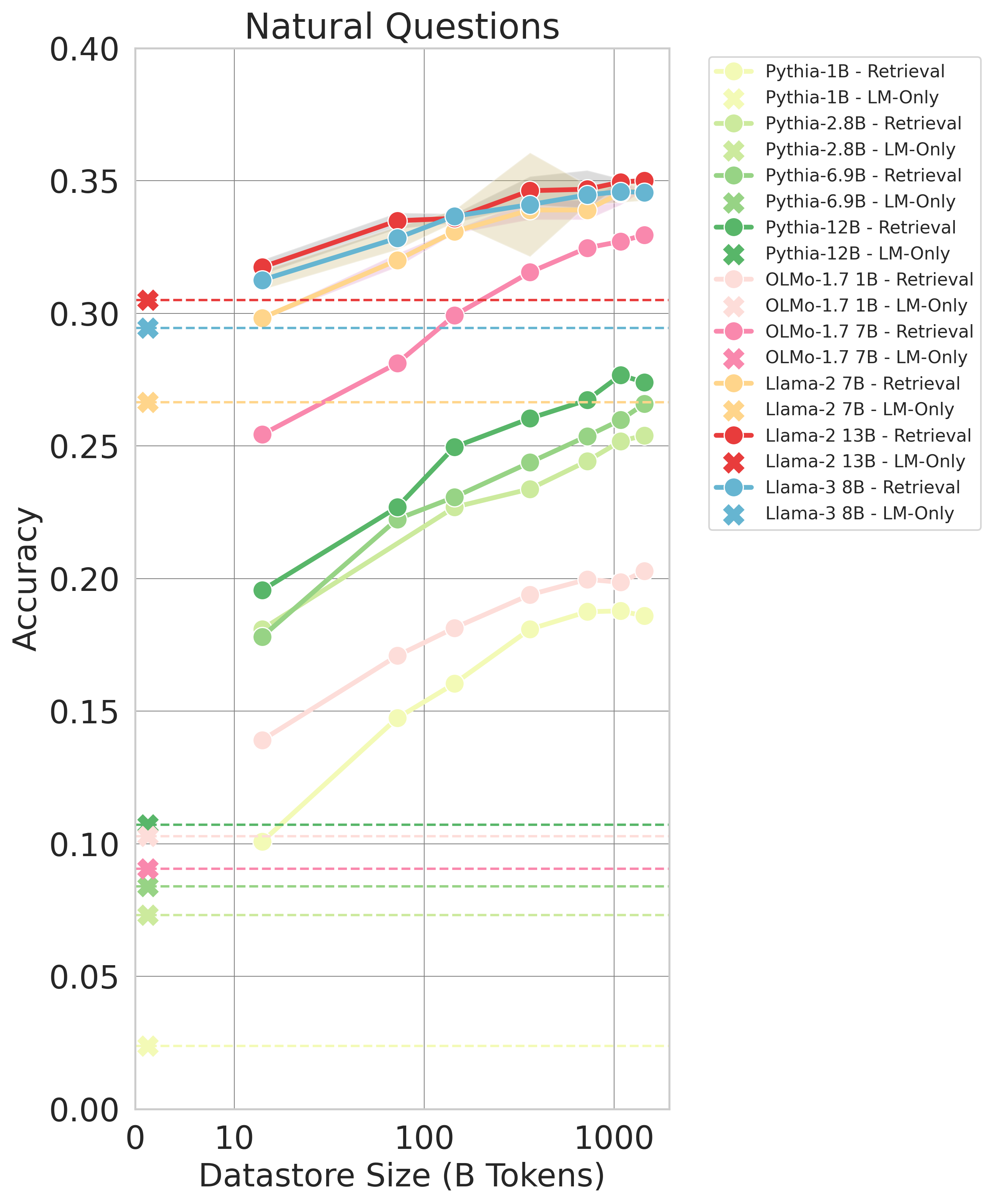}
    \caption{Complete datastore scaling performance on Natural Questions with \pythia\, \olmo\, and \llama models.}
    \label{fig:nq_all_models}
\end{figure}

\begin{figure}[H]
    \centering
    \includegraphics[width=\linewidth]{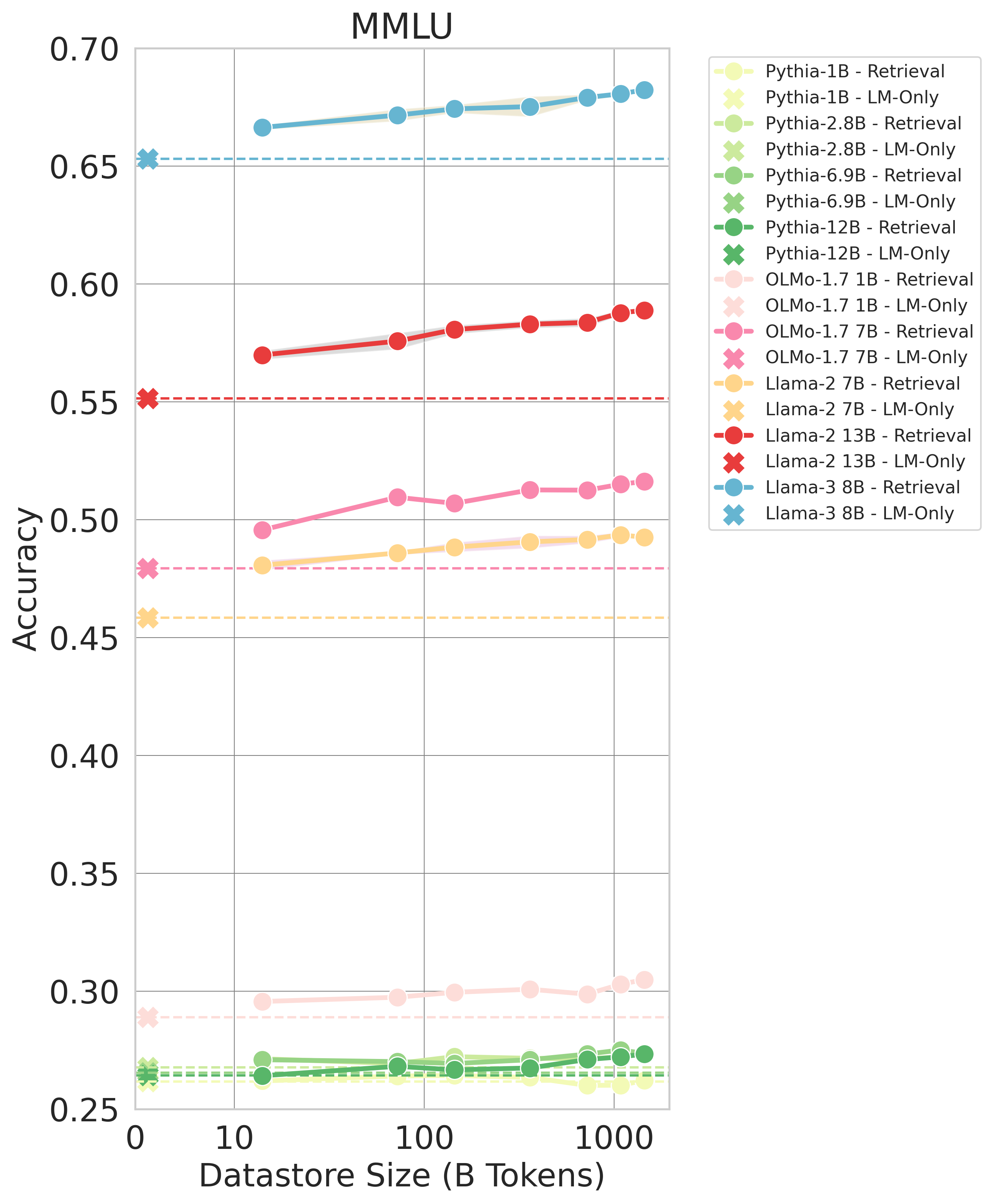}
    \caption{Complete datastore scaling performance on MMLU with \pythia\, \olmo\, and \llama models.}
    \label{fig:mmlu_all_models}
\end{figure}

\begin{figure}[H]
    \centering
    \includegraphics[width=\linewidth]{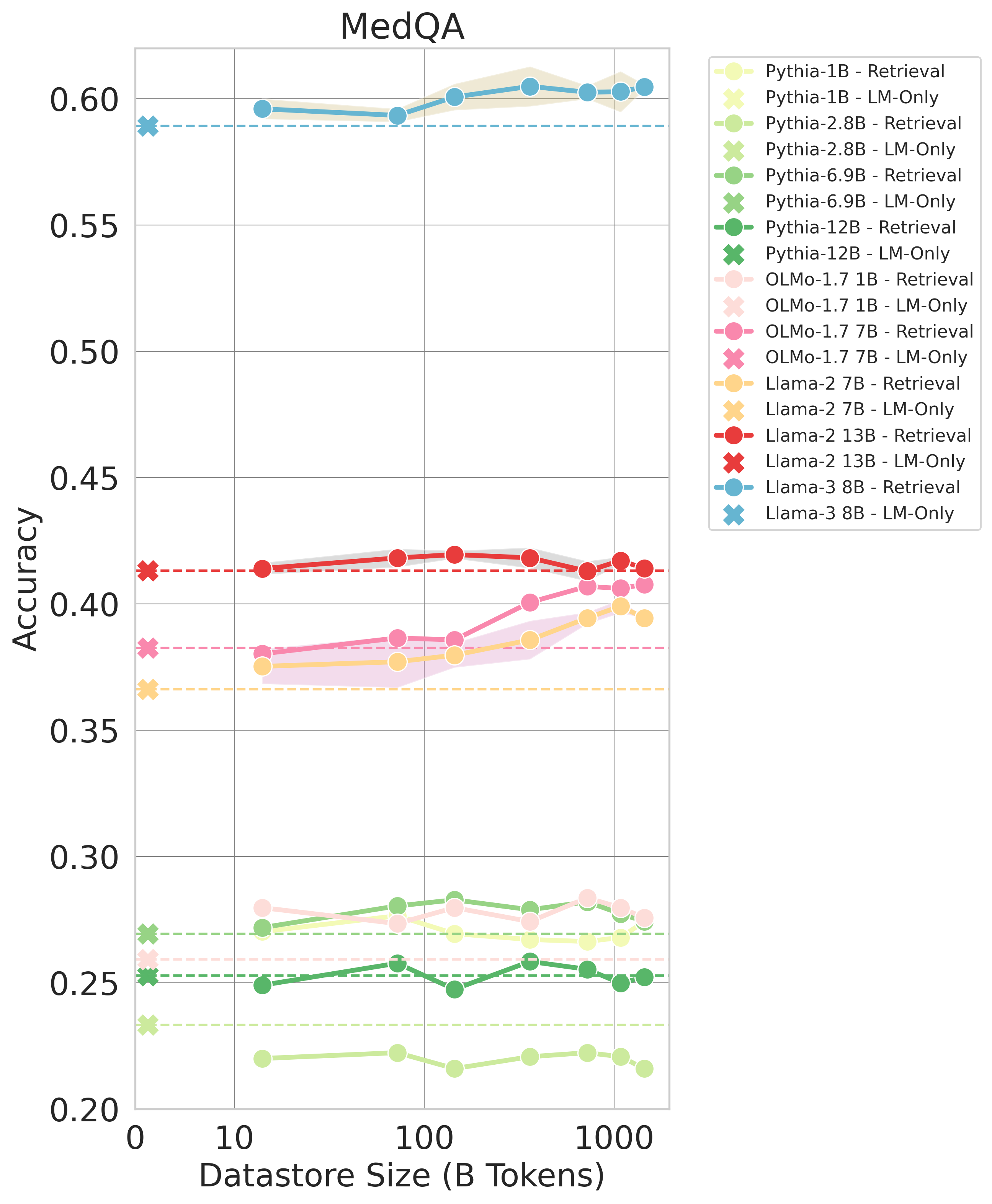}
    \caption{Complete datastore scaling performance on MedQA with \pythia\, \olmo\, and \llama models.}
    \label{fig:medqa_all_models}
\end{figure}

\section{Discussion on the Inferior \llama-3 PPL Performance on RedPajama Data}\label{app:explain_ppl}

We show in Figure~\ref{fig:scaling} that \llama-3 8B has worse PPL scores than \llama-2 7B on RedPajama data with and without retrieval augmentation, which contradicts the intuition that \llama-3 8B is stronger than \llama-2 7B. In fact, \citet{xiao2023smoothquant} also reported worse PPL scores for \llama-3 8B than \llama-2 7B. We note that many factors can contribute to the difference in PPL evaluation, such as the pretraining data and context length. For example, \llama-3 incorporated a post-training process using instruction-tuning data, which aims to enhance model alignment and output quality for complex tasks but could shift performance metrics away from those optimal for simple PPL evaluations. In addition, \llama-3 was trained with significantly more data---
a domain (such as RedPajama) could be down-weighted in a larger corpus, leading to less memorization for this certain domain during pretraining.
\section{Additional Analysis}
In this section, we provide additional analyses of the impact of retriever and data quality filtering.

\subsection{Ablation on the Retriever}\label{app:retriever}
In this section, we ablate the choice of retriever. Since datastore construction is expensive, we subsample 10\% from the full corpus of \dsname for this ablation study. We build datastores with 3 different retrievers, \textsc{Contriever-MSMARCO}~\citep{izacard2022unsupervised}, \textsc{DRAGON-Roberta}~\citep{lin2023train}, \textsc{GTR-T5-Base}~\citep{ni2021large}, and evaluate on upstream perplexity and downstream tasks with them separately. A subset of the tasks, i.e., RedPjama for language modeling evaluation, and NQ and MMLU for downstream evaluation, is used to compare different retrievers. The evaluation results are shown in Table~\ref{tab:retriever}. We find the 3 retrievers perform similarly. 

\begin{table}[h]
    \centering
    \caption{Ablation on the retriever. We evaluate different retrievers using 10\% randomly sampled \dsname. We evaluate with \llama-2 7B on language modeling with RedPajama data and downstream tasks Natural Questions and MMLU. The best performance is highlighted in \textbf{bold}, and the second best is \underline{underlined}.
    % \rulin{TODO: double check if the setting is consistent.}
    % ``GTR-Base$\rightarrow$BM25'' stands for using GTR-Base to build the datastore and use BM25 as the reranker. 
    % \rulin{Perhaps we can add two columns to show datastore construction time and search speed if we want to claim we chose Contriever for efficiency consideration. (I currently prefer to just state it in text as the empirical speed depends on the repo implementation)}
    }
    \label{tab:retriever}
    \begin{tabular}{lcc|ccc}
    \toprule
        \multicolumn{3}{c|}{\textbf{Retriever}} & \textbf{Perplexity} $\downarrow$ & \textbf{Natural Questions} $\uparrow$ & \textbf{MMLU} $\uparrow$ \\
        \textbf{Name} & \textbf{Type} & \textbf{Size} \\
    \midrule
        Contriever & dense & 177M & \underline{4.2210} & \underline{0.3321} & \underline{0.4922} \\
        DRAGON & dense & 110M & 4.2373 & \textbf{0.3399} & 0.4875 \\
        GTR-Base & dense & 110M & \textbf{4.2146} & 0.3080 & \textbf{0.4934} \\
        % GTR-Large & dense & 335M & \\
        % BM25 & sparse & / & \\
        % GTR-Base$\rightarrow$BM25 & hybrid & 110M & \\
    \bottomrule
    \end{tabular}
\end{table}

We empirically find the implementations of Contriever\footnote{\url{https://github.com/facebookresearch/contriever}} and DRAGON\footnote{\url{https://huggingface.co/facebook/dragon-roberta-query-encoder}} run much faster than the sentence-transformer\footnote{\url{https://sbert.net/}} implementations, e.g., GTR-Base~\footnote{\url{https://huggingface.co/sentence-transformers/gtr-t5-base}}. As a result, we choose Contriever in our full-size scaling study with a consideration of both performance and efficiency.

\subsection{Effect of Data Quality Filtering}
\label{app_sec:quality}

\begin{figure}[thbp]
    \centering
    \includegraphics[width=1\linewidth]{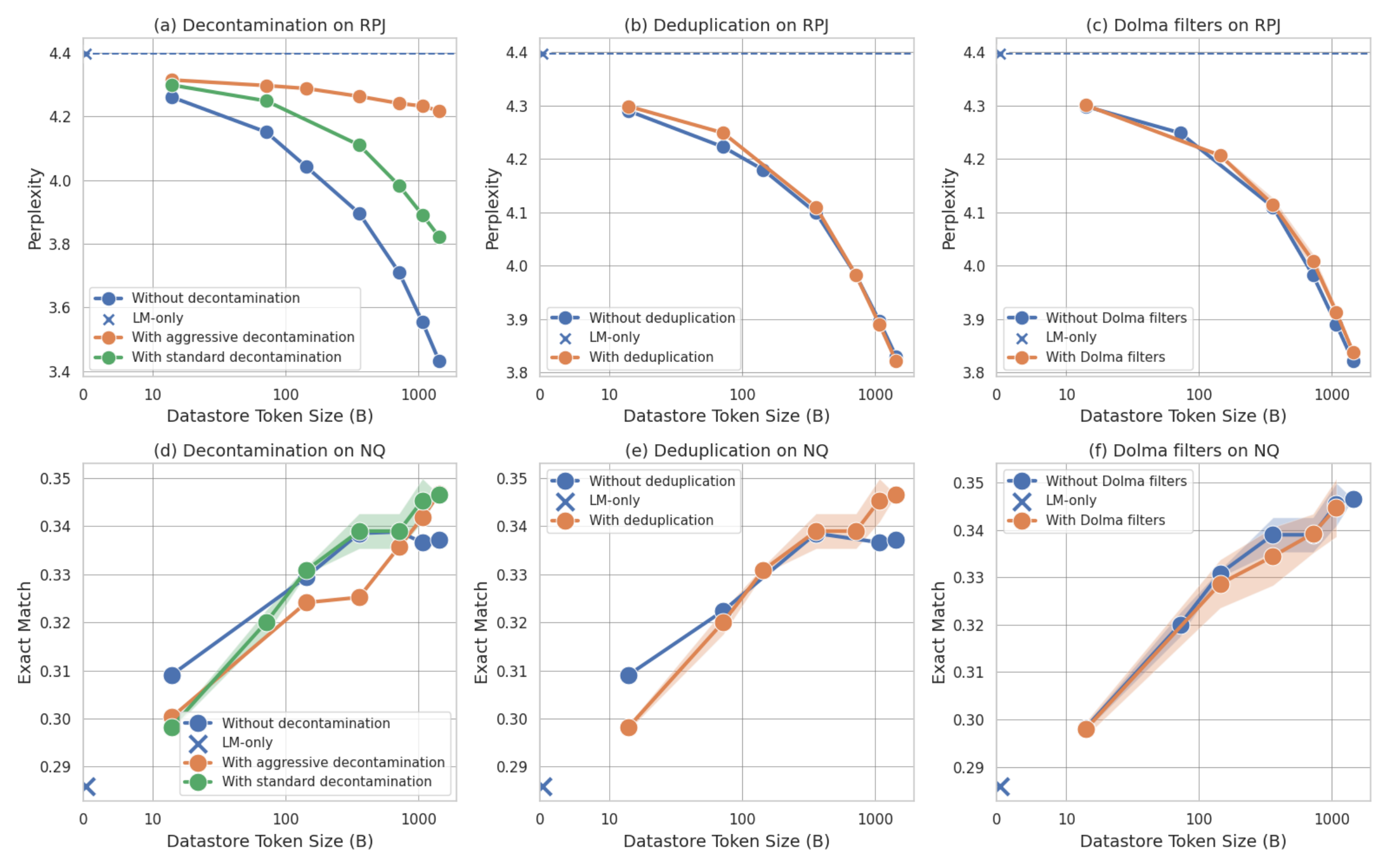}
     \caption{Analysis on the effect by deduplication, decontamination, and quality filters, from left to right (Section~\ref{sec:analysis-data}). The first row corresponds to language modeling on \textsc{RedPajama} and the second row shows QA results on Natural Questions (NQ).
     }
    \label{fig:ful_abl}
\end{figure}

\paragraph{Setup.} We consider three data quality filters applied to the \textsc{Dolma}~\citep{soldaini2024dolma} corpus: (1) a \textbf{whitespace filter} which counts the number of whitespace-separated tokens in each document, and filters out documents with counts under a manually defined threshold; (2) a \textbf{language filter} which uses a FastText~\citep{bojanowski2017enriching} model to detect the language of the document and filters out those with low model confidence; (3) an \textbf{alphanumeric} filter which only retains documents which contain at least one alphanumeric  character, and do not contain a span of all punctuation characters (the length of the span is user-defined).

\paragraph{Data deduplication.}
Removing data duplicates has proven effective for pretraining more compute-optimal language models~\citep{lee-etal-2022-deduplicating}.
% However, to the best of our knowledge, there has been no study on how deduplication impacts the datastore.
Duplicates are especially undesirable in the context of retrieval augmentation as they repeat the same information while increase the inference cost, raising a need for global deduplication.
In our default setting (Section~\ref{sec:results}), we perform global datastore deduplication based on 13-gram Jaccard similarity, similar to \citet{lee-etal-2022-deduplicating}. Additionally, we report results without global deduplication for comparison.

Figure~\ref{fig:ful_abl} (b) and (e) report the results on language modeling perplexity (on RedPajama) and on NQ, respectively.
We find negligible impact of global deduplication in language modeling perplexity.
% as the source corpus has already been moderately deduplicated~\footnote{For example, RedPajama applied CC Net pipeline on CommonCrawl data and fine-grained filters on other domain-specific data: \url{https://github.com/togethercomputer/RedPajama-Data/tree/rp_v1/data_prep}.}.
On NQ, deduplication is crucial to minimizing saturation as the datastore scales; intuitively, subsampling with higher $p$ increases the chance of seeing more duplicates~\footnote{The source corpus of \dsname has been applied moderate deduplication. For example, RedPajama preprocessed the data using several filters: \url{https://github.com/togethercomputer/RedPajama-Data/tree/rp_v1/data_prep}. However, we still find many duplicates in the retrieved results.}.

\paragraph{\textsc{Dolma} Quality filtering.}
To study the impact of quality filtering, we consider a data filter adapted from \textsc{Dolma}~\citep{soldaini2024dolma} which uses a combination of three filters: a {whitespace filter};  a {language filter}, and an {alphanumeric} filter (detailed in Appendix Section~\ref{app_sec:quality}). 

Figure~\ref{fig:ful_abl} (c) and (f) indicate that the quality filter has a relatively limited effect.
We hypothesize that the data sources we used in \dsname, such as RedPajama, have already gone through similar quality filtering processes and may not benefit much from applying additional filtering.
While not explored in this paper, recent studies indicate that more computationally expensive but higher-quality filters can further enhance pre-training~\citep{abbas2023semdedup, penedo2023refinedweb}; we suggest future work to explore such filters for potential performance improvements.

\paragraph{Removing Small Chunks from \dsname}
We show a few examples of the top-$1$ retrieved documents before and after removing the short chunks that have less than 13 words in Figure~\ref{fig:short_chunk}. Without removing these chunks, we find retriever tend to retrieve documents with verbatim text overlap to the question, but do not provide helpful information about the answer, leading to a degradation in end-task performance. Figure~\ref{fig:short_perf} indicates that removing short chunks can significantly improve the retrieval-based LM performance with a large datastore, which is more likely to contain short, but meaningless word chunks.

\begin{figure}
    \centering
    \includegraphics[width=\linewidth]{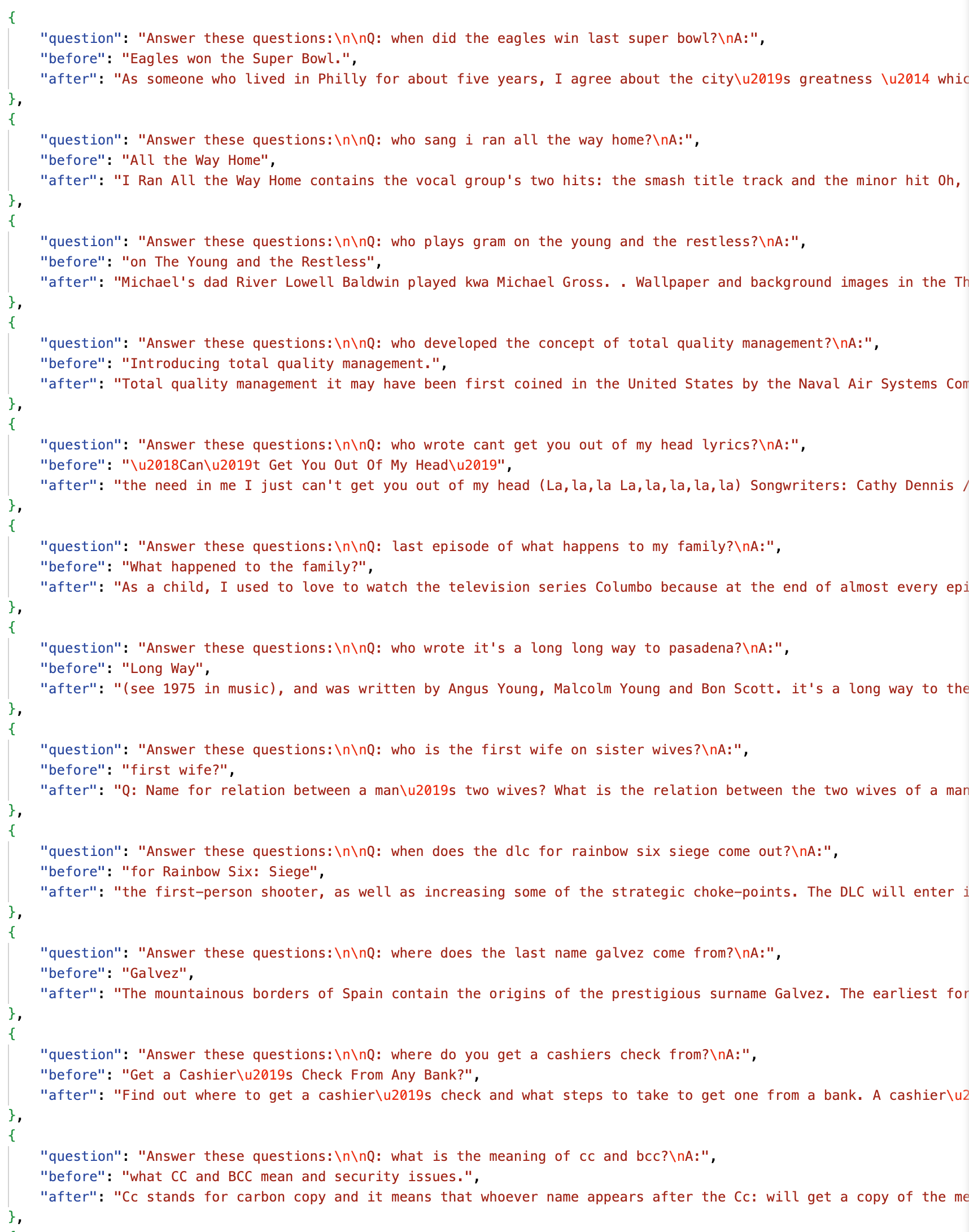}
    \caption{Retrieved documents for NQ when short chunks ($< 13$ words) are not removed. \texttt{``questions''} are inputs from NQ; \texttt{``before''} refers to the top-$1$ retrieved document when we do not remove short chunks; \texttt{``after''} refers to the top-$1$ document after a short-chunk removal step is added to our pipeline. The retriever tends to retrieve a short chunk that may have high lexical overlap with the question, but does not provide any useful information for the answer.}
    \label{fig:short_chunk}
\end{figure}

\begin{figure}
    \centering
    \includegraphics[width=0.6\linewidth]{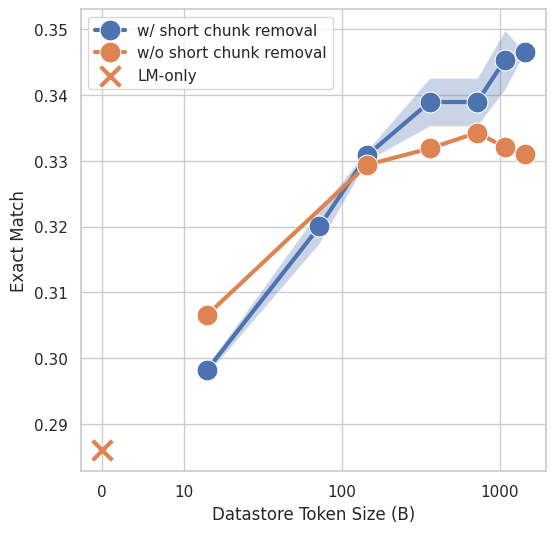}
    \caption{Comparison of NQ performance with and without the removal of short chunks from \dsname. We use \llama-2 7B as the reader model.}
    \label{fig:short_perf}
\end{figure}

\end{document}